\documentclass{article} 
\usepackage{iclr2026_conference,times}

\usepackage{amsmath,amsfonts,bm}

\def\eqref#1{equation~\ref{#1}}

\def\1{\bm{1}}

\DeclareMathAlphabet{\mathsfit}{\encodingdefault}{\sfdefault}{m}{sl}
\SetMathAlphabet{\mathsfit}{bold}{\encodingdefault}{\sfdefault}{bx}{n}

\DeclareMathOperator*{\argmin}{arg\,min}

\usepackage{microtype}
\usepackage{hyperref}
\usepackage{url}
\usepackage{booktabs}

\usepackage{bm}
\usepackage{graphicx}

\usepackage{amsmath,amssymb,amsfonts}
\usepackage{courier}
\usepackage{cleveref}
\usepackage{color}
\usepackage{amsfonts}       
\usepackage{nicefrac}       
\usepackage{microtype}      
\usepackage[dvipsnames,table,xcdraw]{xcolor}                    
\usepackage{multirow}       
\DeclareMathOperator{\FourierSVD}{FourierSVD}
\usepackage[createShortEnv]{proof-at-the-end}
\usepackage{amsthm}
\usepackage{thmtools,thm-restate}
\newtheorem{theorem}{Theorem}[section]

\newtheorem{lemma}[theorem]{Lemma}
\newtheorem{prop}{Proposition}
\usepackage{pythonhighlight}
\usepackage{wrapfig}        
\usepackage{svg}
\usepackage{amsthm}
\usepackage{hyperref}
\usepackage{subfigure}
\usepackage{enumitem}
\usepackage{bbm}
\usepackage{wrapfig}
\usepackage{makecell}
\usepackage{multirow}
\usepackage[ruled,vlined]{algorithm2e}

\usepackage{algpseudocode}
\usepackage{lipsum} 

\usepackage{booktabs}
\usepackage{pifont}
\usepackage{colortbl}
\usepackage[font=small]{caption}
\definecolor{mygraylite}{gray}{.94}
\definecolor{mygray}{gray}{.89}
\definecolor{darkergreen}{RGB}{21, 152, 56}
\definecolor{amber}{rgb}{1.0, 0.75, 0.0}
\definecolor{darkseagreen}{rgb}{0.56, 0.74, 0.56}
\definecolor{darkblue}{rgb}{0, 0, 54.5}
\definecolor{darkorange}{rgb}{220,88,42}

\newlist{myitemize2}{itemize}{4}
\setlist[myitemize2,1]{label={},leftmargin=0em}
\newlist{myitemize}{itemize}{4}
\setlist[myitemize,1]{label=\textbullet,leftmargin=1em}
\usepackage{hyperref}
\usepackage{url}

\usepackage{listings}
\usepackage{xcolor}
\usepackage{amsmath}

\definecolor{codegreen}{rgb}{0,0.6,0}
\definecolor{codegray}{rgb}{0.5,0.5,0.5}
\definecolor{codepurple}{rgb}{0.58,0,0.82}
\definecolor{backcolour}{rgb}{0.95,0.95,0.92}
\definecolor{inlinebackcolour}{rgb}{0.95,0.95,0.95}

\lstdefinestyle{customstyle}{
    basicstyle=\linespread{1.0}\ttfamily\nornmalsize,
    breakatwhitespace=false,
    breaklines=true,
    captionpos=b,
    keepspaces=true,
    numbers=left,
    numbersep=5pt,
    showspaces=false,
    showstringspaces=false,
    showtabs=false,
    tabsize=4,
    frame=single,
    framesep=2mm,
    backgroundcolor=\color{backcolour},
    commentstyle=\color{codegreen},
    keywordstyle=\color{blue},
    stringstyle=\color{codepurple},
}

\lstdefinestyle{inlinestyle}{
    basicstyle=\ttfamily\small,
    breakatwhitespace=false,
    breaklines=true,
    keepspaces=true,
    language=Python,
    backgroundcolor=\color{inlinebackcolour},
    frame=none,
    framerule=0pt,
    framesep=1pt,
    framexleftmargin=3pt,
    framexrightmargin=3pt,
    framextopmargin=1pt,
    framexbottommargin=1pt,
    keywordstyle=\color{green!60!black}\bfseries,
    identifierstyle=\color{blue},
}

\usepackage{adjustbox} 
\usepackage{multicol}

\title{LLaVA-FA: Learning Fourier Approximation for Compressing Large Multimodal Models}

\iclrfinalcopy

\author{%
    \bf Pengcheng Zheng\textsuperscript{1}, 
    \bf{Chaoning Zhang}\textsuperscript{1}\thanks{Corresponding Author},
    \bf Jiarong Mo\textsuperscript{1}, 
    \bf GuoHui Li\textsuperscript{1},
    \bf Jiaquan Zhang\textsuperscript{1},\\ 
    \bf Jiahao Zhang\textsuperscript{2},  
    \bf Sihan Cao\textsuperscript{1}, 
    \bf Sheng Zheng\textsuperscript{1},  
    \bf Caiyan Qin\textsuperscript{3}, 
    \bf Guoqing Wang\textsuperscript{1}, 
    \bf Yang Yang\textsuperscript{1}
    \\
    \textsuperscript{1}University of Electronic Science and Technology of China\\
    \textsuperscript{2}Chengdu University of Information Technology\\
    \textsuperscript{3}Harbin Institute of Technology, Shenzhen
    \\ \texttt{zpc777@std.uestc.edu.cn},~~  \texttt{chaoningzhang1990@gmail.com}
}

\usepackage{amsmath}
\begin{document}

\maketitle

\begin{abstract}
Large multimodal models (LMMs) have achieved impressive performance on various vision-language tasks, but their substantial computational and memory costs hinder their practical deployment. Existing compression methods often decouple low-rank decomposition and quantization, leading to compounded reconstruction errors, especially in multimodal architectures with cross-modal redundancy. To address this issue, we propose LLaVA-FA, a novel efficient LMM that performs joint low-rank plus quantization approximation in the frequency domain. By leveraging the de-correlation and conjugate symmetry properties of Fourier transform, LLaVA-FA achieves more compact and accurate weight representations. Furthermore, we introduce PolarQuant, a polar-coordinate quantization method tailored for complex matrices, and an optional diagonal calibration (ODC) scheme that eliminates the need for large-scale calibration data. Extensive experimental results demonstrate that our proposed LLaVA-FA outperforms existing efficient multimodal models across multiple benchmarks while maintaining minimal activated parameters and low computational costs, validating its effectiveness as a powerful solution for compressing LMMs.
\end{abstract}

\section{Introduction}
In the past few years, large multimodal models (LMMs) have shown its extraordinary performance on multiple down-stream tasks, such as zero-shot classification~\citep{ge2023improving, udandarao2024no}, image-text retrival~\citep{long2023multiway, liu2025lamra} and multimodal dialogue generation~\citep{liu2024speak,koh2023generating}. Despite their substantial applications, LMMs are computationally expensive, which limits their broader usage. For instance, the training of LLaVA 70B models necessitates a total of over 800 GPU hours, as calculated based on NVIDIA A100 GPUs~\citep{li2024llava}. The substantial computational costs of LMMs present ongoing challenges. Aside from training, the inference of LMMs requires significant electricity consumption, which raises concerns about environmentally friendly AI. In light of this, developing efficient LMMs to reduce the memory and computational requirements is becoming crucial to ensure their broad accessibility.

Considering the correlated nature of multimodal semantics during training, the weight matrices of LMMs often exhibit redundancy, which manifests as a low-rank structure~\citep{saha2024compressing}. To this end, one promising framework for memory-efficient LMMs is through parameter-efficient fine-tuning methods, \textit{e.g.,} low-rank adaptation (LoRA)~\citep{hu2022lora}, where the pretrained model’s weight matrix $\mathbf{W}$ is reparameterized as $\mathbf{W}+\mathbf{L}_1\mathbf{L}_2$. Recent studies such as LoRD~\citep{kaushal2023lord}, ASVD~\citep{yuan2023asvd}, FWSVD~\citep{hsulanguage}, LASER~\citep{sharma2023truth}, LQER~\citep{zhang2024lqer}, and ZeroQuant-V2~\citep{yao2024exploring} have explored the efficacy of low-rank structures in large language models (LLMs) weights, utilizing LoRA technique to compress LLMs. However, these approaches treat low-rank factorization and quantization independently, which leaves the rank-selection stage blind to the impending quantization noise and inevitably compounds the final reconstruction error~\citep{iacovides2024towards}. Therefore, Saha et al.~\citep{saha2024compressing}. approach this by uniquely formulating a joint optimization problem for generic matrices, providing additional flexibility for compression.

Due to its simplicity and astounding accuracy, the low-rank with quantization paradigm dominates in the field of efficient LLMs recently. However, we argue that this paradigm still faces storage challenges when handling extensive customization approximation for large vision-language models. For example, unlike pure-text LLMs, large vision-language models carry an extra tower of image encoders whose cross-modal adapter ranks balloon with every new visual domain, so the same low-rank with quantization recipe that tamed LLMs still leaves their multimodal stack uncomfortably obese. To this end, we naturally ask the question: \textit{How can we aggressively compress learnable parameters even further for compressing large vision-language models?}

Previous works have demonstrated the powerful expressiveness of Fourier transform in data compression, where extremely sparse spectral information can be used to recover high fidelity 1D signal vectors~\citep{rudelson2008sparse, duarte2013spectral, zwartjes2007fourier, zhang2024jointly, zhang2023fcir, zheng2025lightweight} and 2D image matrices~\citep{vlaardingerbroek2013magnetic, song2021solving}. More importantly, when dealing with more general matrices, \textit{e.g.,} weight matrices of neural networks that lack strong spatial semantics and are not frequency-spare, Fourier transform can still handle approximation effectively~\citep{chen2013spectral, yang2016exact}. Due to the inherent de-correlation capabilities of frequency domain, we observe that the weight matrices of LMMs in the frequency domain have a more compact spread of singular values as compare to spatial domain. Furthermore, we provide both theoretical and analytical evidence demonstrating that the accumulated error for matrix factorization in the frequency domain, utilizing a low-rank approximation, is smaller than that in the spatial domain at the same rank. This effectively enhances generalization performance. Additionally, benefiting from the conjugate symmetry of Fourier transform, approximating weight matrix in the frequency domain can save nearly half learnable parameters compared to the spatial domain.
Motivated by these analyses~\footnote{We give a detailed proof of these analyses in Appendix~\ref{app:fourier}.}, we investigate the potential for compressing the weight matrix of LMMs with matrix approximation in the frequency domain. We therefore ask a natural question: \textit{Can we harness the de-correlation, conjugate symmetry, and energy-compaction merits of Fourier space to perform LoRA-style low-rank plus quantization compression in a single shot?}

To answer this question, we design a novel efficient and high-quality LMM with Fourier approximation, coined as LLaVA-FA, which approximately decomposes weight matrices into low-rank plus quantized weight in the frequency domain. We devise a novel PolarQuant codec that separately discretizes amplitude and phase in polar coordinates. The conjugate symmetry of Fourier transform allows us to store only half of the coefficients, while the energy concentration guarantees smaller Frobenius error for the same rank compared with spatial-domain truncation. Moreover, we derive an optional diagonal calibration scheme that approximates the full Hessian with row/column means, eliminating the need for large-scale calibration data~\footnote{This approximation leverages the empirical observation that Hessian matrices in deep networks often exhibit diagonal dominance or low-rank structure, enabling accurate low-rank approximation via row/column means~\citep{lecun1989optimal, botev2017practical}.}. Extensive experiments demonstrate that LLaVA-FA surpasses existing works across various benchmarks while maintaining minimal activated parameters and low computational costs. In conclusion, the main contributions of this paper are summarized as follows.

\vspace{-0.6em}
\begin{itemize}[leftmargin=*]
\vspace{-0.1em}
\item We introduce LLaVA-FA, a novel efficient LMM that decomposes the weight matrices into a low-rank plus quantized weight via Fourier approximation.

\vspace{-0.1em}
\item We design PolarQuant, an amplitude-and-phase polar codec that tailored for quantizing complex matrix. It preserves complex structure and stabilizes the low-bit reconstruction.

\vspace{-0.1em}
\item We introduce an optional diagonal calibration (ODC) scheme, which can approximate the full Hessian matrix with row/column
means, which enables more robust compression without large-scale calibration sets.

\vspace{-0.1em}
\item Extensive experimental results demonstrate that LLaVA-FA surpasses existing works across various tasks, including comprehension-oriented and hallucination-oriented benchmarks, while maintaining minimal activated parameters and low computational costs.




\vspace{-0.1em}
\end{itemize}
\vspace{-0.5em}



\begin{figure}[t]
    \centering
    \includegraphics[width=1.0\textwidth]{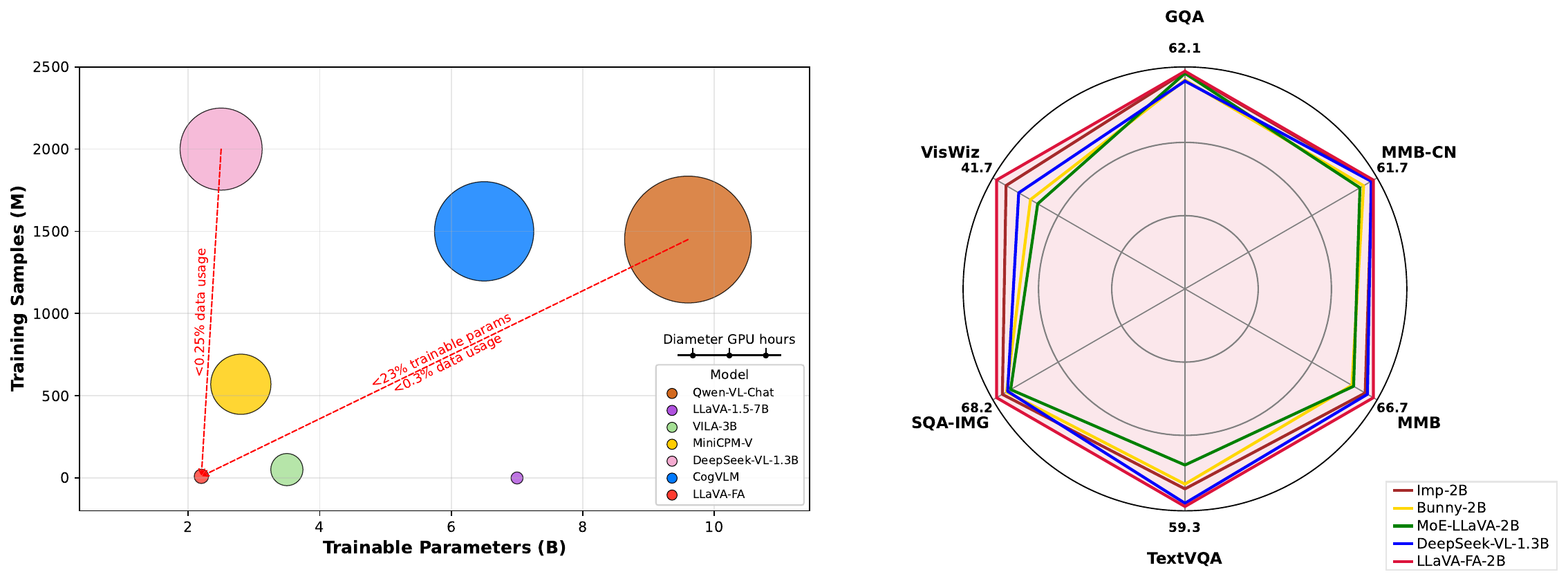}
    \caption{Comparisons of training cost and performance. LLaVA-FA achieves comparable performance with advanced LMMs using lower training costs while outperforming current efficient LMMs by a large margin.}
    \label{fig:efficiency}
    \vspace{-2mm}
\end{figure}

\section{Related Works}
\subsection{Large Multimodal Models}
The evolution of LMMs compression begins with LLM-Pruner~\citep{ma2023llm}, which adapts task-agnostic pruning from LLMs to LMMs. Subsequent studies highlighted that, beyond weight parameters, the predominant efficiency bottleneck in LMMs arises from the proliferation of visual tokens. To address this, LLaVA-PruMerge~\citep{shang2024llava} introduces input-level optimization by adaptively retaining key tokens via attention scores, reducing up to 14$\times$ tokens while preserving performance. DivPrune~\citep{alvar2025divprune} advances this line by emphasizing token diversity rather than mere importance, mitigating redundancy and ensuring richer visual representation. CrossGET~\citep{crossget2024} further pushes token-centric acceleration by ensemble-guided token selection, achieving $>2\times$ throughput on VQA benchmarks. Complementary to token pruning, UPop~\citep{upop2023} proposes a progressive magnitude-based pruning scheme that trims 40\% of weights across vision-language layers with $<1\%$ accuracy drop. Finally, CASP~\citep{Gholami_2025_CVPR} shifts focus inward, exploiting cross-modal attention sparsity to compress attention matrices \textit{$W_q, W_k$} with theoretical guarantees, thus marking the transition from generic LLM techniques to multimodal-specific compression paradigms.  
Unlike the token-centric CrossGET~\citep{crossget2024} and weight-centric UPop~\citep{upop2023}, our method directly compresses both vision and language parts in one shot without pruning tokens or altering the architecture. Instead, we approximate the full weight matrix in the frequency domain via joint low-rank plus quantization, enabling calibration-free deployment and offering a drop-in replacement for weight-pruning when zero-shot compression is desired.
\begin{wrapfigure}[15]{r}{0.3\textwidth} 
    \centering
    \vspace{0.6cm}
\includegraphics[width=0.3\textwidth]{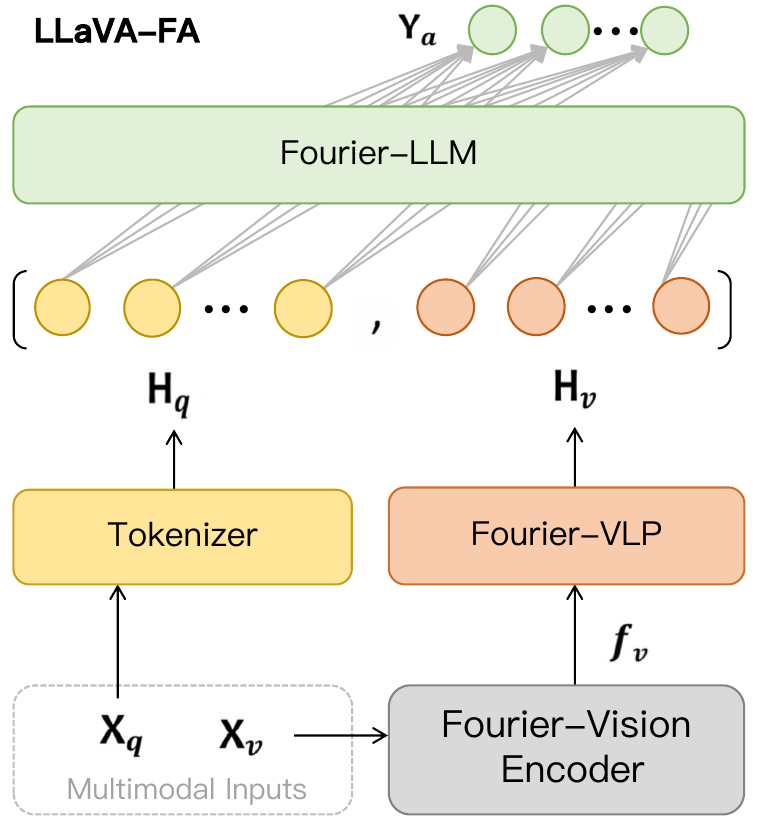}
    \caption{The architecture of the proposed LLaVA-FA. }
    \label{fig:LLaVA-FA}
\end{wrapfigure}
\subsection{Fourier Transform}
In deep learning, applications of the Fourier transform in the frequency domain have gradually evolved from straightforward detection tasks to broader computational optimization. Jeong et al.~\citep{Jeong_2022_WACV} introduce BiHPF, amplifying frequency-domain artifacts via bilateral high-pass filters for improving cross-domain generalization. Advanced with this, FreqNet~\citep{tan2024frequency} enforces continuous high-frequency focus through frequency domain learning and convolution layers on phase/amplitude spectra, significantly enhancing detection on unseen domains. Recently, Yang et al.~\citep{yang2025freqts} extend frequency analysis to model acceleration with FreqTS, which separates diffusion model token into high and low-frequency subsets, achieving markedly speedup without retraining by selectively reducing low-frequency features. To the best of our knowledge, we make the first attempt to perform a joint low-rank plus quantization approximation directly in the frequency domain.

\subsection{Low Rank Adaptation}
The immense computational and storage costs of fully fine-tuning LLMs have spurred the development of Parameter-Efficient Fine-Tuning (PEFT) techniques. Among these, Low-Rank Adaptation (LoRA)~\citep{hu2022lora} has become a foundational method. Subsequent research has focused on enhancing LoRA's efficiency and efficacy. For efficiency, QLoRA~\citep{dettmers2023qlora} makes a significant impact by back-propagating gradients through a frozen, 4-bit quantized base model into the LoRA adapters. To improve efficacy, DoRA~\citep{liu2024dora} decomposes pre-trained weights into magnitude and direction, applying LoRA specifically to the directional component, which outperforms standard LoRA. The application of LoRA has extended multimodal and multi-task learning, LLaVA-MoLE~\citep{chen2024llava} leverages a sparse mixture of LoRA experts to reduce data conflicts by routing tokens to task-specialized modules. Our approach integrates LoRA with quantization techniques to construct a more efficient LMM. 

\section{Methodology}
We introduce LLaVA-FA, a novel framework for building an efficient LMM via Fourier approximation. As shown in Fig. \ref{fig:LLaVA-FA}, LLaVA-FA consists of four main modules: a Fourier-vision encoder tasked with receiving and processing visual inputs $X_v$, a text tokenizer to map text prompts $X_q$ to the text tokens $H_q$, and a Fourier Visual-Language Projector (Fourier-VLP), which functions as a bridge to align the two modalities. Then, text tokens $H_q$ and the visual tokens $H_v$ are concatenated as the input of the Fourier-based LLM, which outputs the final response sequence $Y_a$ in an autoregressive manner:
\begin{equation}
p(Y_a | H_v, H_q) = \prod_{i=1}^{L} p(y_i | H_v, H_q, y_{<i}),
\end{equation}
where $L$ denotes the length of $Y_a$. LLaVA-FA relies on an elaborate factorization scheme, which decomposes each pretrained matrix into two low-rank complex matrices plus a quantized complex matrix via 2D-Fourier transform.

\subsection{Low-Rank Plus Quantized Weight Decomposition}
Due to the nature of language syntax and visual semantics learned during training, the weight matrices of LMMs often exhibit redundancy, which manifests as a low-rank structure. As shown in Fig.~\ref{fig:method}, the singular value profile of the weight matrices in LMMs follow a decaying profile. To this end, we can approximate the weight matrix of a neural network, $\mathbf{W}$, as a low-rank plus quantized weight decomposition, \textit{i.e.,} $\mathbf{W} \approx \mathbf{Q} + \mathbf{L_1}\mathbf{L_2}$. The low-rank factors $\mathbf{L_1}$ and $\mathbf{L_2}$ capture the effect of the large singular component of $\mathbf{W}$ with full precision. Moreover, the backbone matrix $\mathbf{Q}$ is quantized to low-precision format - for instance, using $\mathrm{B_Q} = 2$ bits, coarsely captures the essence of the moderately decaying and low singular components of $\mathbf{W}$. We approach this problem by approximately solving this following minimization problem:
\begin{equation}
\min_{\mathbf{Q}, \mathbf{L_1}, \mathbf{L_2}} \|\mathbf{W} - (\mathbf{Q} + \mathbf{L_1}\mathbf{L_2})\|_F \quad \text{subject to} \quad \mathbf{Q} \in \mathbb{Q}_\mathrm{Q}^{d_1 \times d_2}, \mathbf{L_1} \in \mathbb{R}^{d_1 \times k}, \text{ and } \mathbf{L_2} \in \mathbb{R}^{k \times d_2}.
\label{optimization}
\end{equation}
Here, $\mathbb{Q}_\mathrm{Q}^{d_1 \times d_2} \subset \mathbb{R}^{d_1 \times d_2}$ denote the lattice codebooks used to quantize $\mathbf{Q}$, using $\mathrm{B_Q}$ bits. This strategy leverages the approximate low rank structure inherent in weight matrix, where lesser contributing singular components can be pruned with minimal impact on the functionality of the matrix.

\label{sec:bit-budget}
\textit{\textbf{Discussion}} - \textit{we provide the analyses of computation of average bit budget per-parameter.} We assume that each transformer layer have seven weight matrices (\textit{i.e.,} query, key, value, output, gate, up and down) with dimensions of $d_1^i \times d_2^i$, where $i \in \{ 1, \dots, 7 \}$. In general, if the backbone matrix $\mathbf{Q}$ has a bit budget of $\mathrm{B_Q}$, the low rank factors both have full bit budget of $\mathrm{B_L}$, then the average number of bits per parameter $\mathrm{B_{avg}}$ is 
\begin{equation}
\mathrm{B_{avg}} ={\displaystyle\sum_{i=1}^{7}\! \ \Bigl(\mathrm{B_{Q}}\,d_1^i d_2^i + k\,\mathrm{B_{L}}(d_1^i+d_2^i)\Bigr)}/
     {\displaystyle\sum_{i=1}^{7} d_1^i d_2^i}.
\label{eq:avg}
\end{equation}
When $k < \left(1 - \frac{\mathrm{B_{Q}}}{\mathrm{B_{L}}}\right) \frac{\sum_{i=1}^7 d_1^{i} d_2^{i}}{\sum_{i=1}^7 (d_1^{i} + d_2^{i})}$~\label{condition} (the detailed derivations can be found in Appendix~\ref{app:rank-condition}), the average bits $\mathrm{B_{avg}}$ is smaller than the full precision bits $\mathrm{B_{L}}$. For LLaMa3-8B models, the query and output matrices are $4096 \times 4096$, the key and value matrices are $4096 \times 1024$, the gate and up projections are $14336 \times 4096$, and the down projection is $4096 \times 14336$. For common LLMs configuration, $\mathrm{B_{avg}}<\mathrm{B_{L}}$ apparently holds.

\begin{figure*}[!t]
    \centering
    \includegraphics[width= 0.95\textwidth]{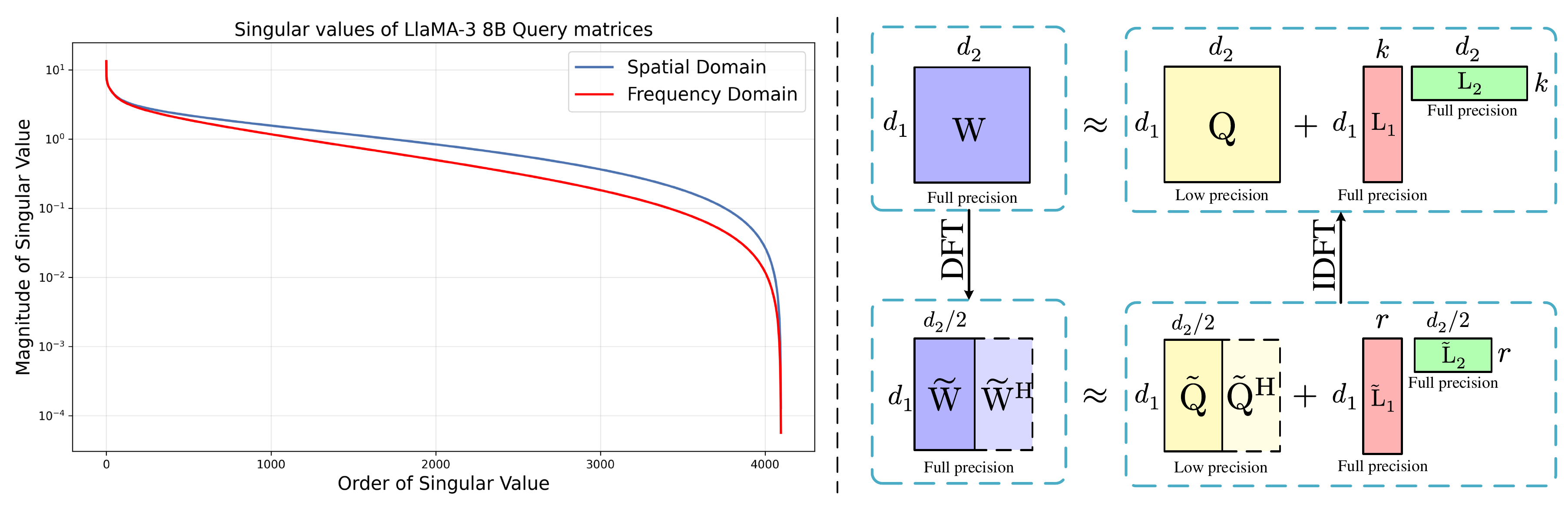}
    \vspace{-2mm}
    \caption{Illustration of Fourier approximation. \textbf{Left}: Singular value spread in the spatial domain \textit{vs.} frequency domain. \textbf{Right}: LLaVA-FA decomposes a real-valued weight matrix $\widetilde{\mathbf{W}}$ into a frequency-domain low-rank component ($\widetilde{\mathbf{L}}_1$ and $\widetilde{\mathbf{L}}_2$) that keeps the top-r singular values with $b_r$, $b_\theta$ bits, plus a PolarQuant residual $\widetilde{\mathbf{Q}}$ for the remaining spectrum. The de-correlation and conjugate symmetry of Fourier transform lets us store only half of the complex coefficients, yielding the same rank as spatial-domain truncation but with fewer parameters and smaller Frobenius error.}
    \vspace{-2mm}
    \label{fig:method}
\end{figure*}

\begin{figure}[htbp]
\centering
\begin{minipage}[t]{0.48\textwidth}
\begin{algorithm}[H]
\label{alg:FA}
\fontsize{8.5}{8.5}\selectfont
\SetAlgoLined
\SetKwInOut{Require}{Require}
\SetKwInOut{Input}{Input}
\SetKwInOut{Output}{Output}
\SetKw{KwDownTo}{downto}
\SetKw{KwAnd}{and}
\SetKw{KwContinue}{continue}
\caption{Fourier Approximation}
\Input{Complex weight matrix $\widetilde{\mathbf{W}}$, Calibration matrix $\mathbf{C}$, Target rank $r$, Amplitude bitwidth $b_r$, Phase bitwidth $b_\theta$}
$\text{Initialize } \widetilde{\mathbf{Q }} \gets \mathbf{0}$ and $\epsilon_0 \gets \infty$\\
\For{$t \gets 1$ \KwTo $T-1$}{
    {\textcolor{gray}{\# Optional Diagonal Calibration Decomposition}}\
    $\widetilde{\mathbf{L}}_1, \widetilde{\mathbf{L}}_2 \gets \operatorname{ODC}(\widetilde{\mathbf{W}} - \widetilde{\mathbf{Q}}, \mathbf{C}, r)$\\
    {\textcolor{gray}{\# Polar Quantization}}\
    $\widetilde{\mathbf{Q}} \gets \operatorname{PolarQuant}\left({\widetilde{\mathbf{W}} - \widetilde{\mathbf{L}}_1 \widetilde{\mathbf{L}}_2}, b_r, b_\theta\right)$\\
    \eIf{$\mathbf{C}$ is $\operatorname{None}$}{
        {\textcolor{gray}{\# Weighted error}}
        $\epsilon_t = \Bigl\| \bigl|\widetilde{\mathbf{W}} - (\widetilde{\mathbf{Q}} + \widetilde{\mathbf{L}}_1 \widetilde{\mathbf{L}}_2)\bigr| \Bigr\|_F$\
    }{
        {\textcolor{gray}{\# Calibration weighted error}}
        $\epsilon_t = \Bigl\| \sqrt{\mathbf{C}} \odot\bigl|\widetilde{\mathbf{W}} - (\widetilde{\mathbf{Q}} + \widetilde{\mathbf{L}}_1 \widetilde{\mathbf{L}}_2)\bigr| \Bigr\|_F$\
    }
    \If{$\epsilon_t > \epsilon_{t-1}$}{
        break\
    }
}
\Output{$\widetilde{\mathbf{Q}}, \widetilde{\mathbf{L}}_1, \widetilde{\mathbf{L}}_2, \epsilon_{t}$} 
\end{algorithm}
\end{minipage}
\hfill
\begin{minipage}[t]{0.48\textwidth}
\begin{algorithm}[H]
\label{alg:fouriersvd}
\fontsize{8.5}{8.5}\selectfont
\SetAlgoLined
\SetKwInOut{Input}{Input}
\SetKwInOut{Output}{Output}
\SetKwFunction{FFT}{FFT}
\SetKwFunction{IFFT}{IFFT}
\SetKwFunction{SVD}{SVD}
\caption{ODC}
\Input{Complex residual matrix $\widetilde{\mathbf{R}} = \widetilde{\mathbf{W}} - \widetilde{\mathbf{Q}}$, Calibration matrix $\mathbf{C}$, Target rank $r$}
\eIf{$\mathbf{C}$ is $\operatorname{None}$}{
    {\textcolor{gray}{\# Perform Fourier SVD on complex matrix}}\\
    $\widetilde{\mathbf{U}}, \boldsymbol{\Sigma}, \widetilde{\mathbf{V}}^H \gets \FourierSVD(\widetilde{\mathbf{R}})$\\
    {\textcolor{gray}{\# Keep only top $r$ singular values}}\\
    $\boldsymbol{\Sigma}_r \gets \boldsymbol{\Sigma}[1:r, 1:r]$\\
    $\widetilde{\mathbf{U}}_r \gets \widetilde{\mathbf{U}}[:, 1:r]$\\ $\widetilde{\mathbf{V}}_r^H \gets \widetilde{\mathbf{V}}^H[:, 1:r]$\\
    $\widetilde{\mathbf{L}}_1 \gets \widetilde{\mathbf{U}}_r \sqrt{\boldsymbol{\Sigma}_r}$, $\widetilde{\mathbf{L}}_2 \gets \sqrt{\boldsymbol{\Sigma}_r} \widetilde{\mathbf{V}}_r^H$\\    
}{
    $\mathbf{D}_{\text{row}} \gets \operatorname{RowAverage}(\mathbf{C})$\\
    $\mathbf{D}_{\text{col}} \,\gets \operatorname{ColAverage}(\mathbf{C})$\\
    $\widetilde{\mathbf{U}}, \boldsymbol{\Sigma}, \widetilde{\mathbf{V}}^H \gets \FourierSVD(\mathbf{D}_{\text{row}} \widetilde{\mathbf{R}} \mathbf{} \mathbf{D}_{\text{col}})$\\
    {\textcolor{gray}{\# Keep only top $r$ singular values}}\\
    $\boldsymbol{\Sigma}_r \gets \boldsymbol{\Sigma}[1:r, 1:r]$\\
    $\widetilde{\mathbf{U}}_r \gets \widetilde{\mathbf{U}}[:, 1:r]$\\ $\widetilde{\mathbf{V}}_r^H \gets \widetilde{\mathbf{V}}^H[:, 1:r]$\\
    $\widetilde{\mathbf{L}}_1 \gets \mathbf{D}_{\text{row}}^{-1}\widetilde{\mathbf{U}}_r \sqrt{\boldsymbol{\Sigma}_r}$\\ $\widetilde{\mathbf{L}}_2 \gets \sqrt{\boldsymbol{\Sigma}_r} \widetilde{\mathbf{V}}_r^H\mathbf{D}_{\text{col}}^{-1}$
}
\Output{$\widetilde{\mathbf{L}}_1, \widetilde{\mathbf{L}}_2$}
\end{algorithm}
\end{minipage}
\end{figure}
\subsection{Weight Approximation in the Frequency Domain}
LLaVA-FA substitutes each weight matrix $\mathbf{W}$ in a LMM, with its approximate low-rank plus quantized weight decomposition in the frequency domain. Our approach is motivated  by the de-correlation of frequency domain transforms, which minimizes spectral redundancy and is effective in compression since they concentrate most of the energy in a few coefficients.
\begin{lemmaE}
\label{lem:svd}
Let $\Delta\mathbf{W}_1,\ \Delta\mathbf{W}_2$ be two adapters of identical shape, $\sigma_{1,k},\ \sigma_{2,k}$ are their $k$-th singular values (sorted descending), and $r$ is the kept rank. If $\sigma_{1,k}<\sigma_{2,k}$ for every $k\ge r+1$, then 
\begin{equation}
\|\Delta\mathbf{W}_1-\Delta\mathbf{W}_1^{(r)}\|_F
<
\|\Delta\mathbf{W}_2-\Delta\mathbf{W}_2^{(r)}\|_F,    
\end{equation}
where $\Delta\mathbf{W}_i^{(r)}$ denotes the rank-$r$ truncation of $\Delta\mathbf{W}_i$. 
\end{lemmaE}
We provide a detailed proof for the above lemma in Appendix~\ref{lemma}. As shown in Fig.~\ref{fig:method}, we observe that the weight matrix has a more compact spread of singular values in the frequency domain as compared to spatial domain. To this end, using Lemma \ref{lem:svd}, we can say that the accumulated error for low-rank approximation is smaller in the frequency domain compared to spatial domain with the same rank. At the same time, the conjugate symmetric property of Fourier transform implies that a matrix in the real spatial domain with the shape $[d_1, d_2]$ can be mapped into the complex frequency domain with the compressed shape $[d_1, d_2/2]$ without losing information. By leveraging this property, approximating a matrix in the frequency domain can save nearly half learnable parameters. Therefore, we approach the optimization problem of Eq.~\ref{optimization} from the perspective of Fourier transform where we first convert the real weight matrix $\mathbf{W}\in\mathbb{R}^{d_1\times d_2}$ into the complex weight matrix $\widetilde{\mathbf{W}}\in\mathbb{C}^{d_1\times d_2/2}$ via 2D Discrete Fourier transform:
\begin{equation}
\widetilde{\mathbf{W}}(u, v) = \sum_{m=0}^{M-1}\sum_{n=0}^{N-1} \mathbf{W}(m,n)\,e^{-j2\pi\left(\frac{um}{M}+\frac{vn}{N}\right)}
,
\end{equation}
where $u=0,1,\dots,M-1$ and $v=0,1,\dots,N-1$ are the frequency-domain indices, $j$ is the imaginary unit. Then, our goal is to obtain a decomposition $\widetilde{\mathbf{W}} \approx \widetilde{\mathbf{Q}} + \widetilde{\mathbf{L}}_1 \widetilde{\mathbf{L}}_2$ in the frequency domain by approximately solving the minimization problem:
\begin{equation}
\min_{\widetilde{\mathbf{Q}}, \widetilde{\mathbf{L}}_1, \widetilde{\mathbf{L}}_2} \Bigl\| \sqrt{\mathbf{C}} \odot\bigl|\widetilde{\mathbf{W}} - (\widetilde{\mathbf{Q}} + \widetilde{\mathbf{L}}_1 \widetilde{\mathbf{L}}_2)\bigr| \Bigr\|_F \quad \text{subject to} \quad \widetilde{\mathbf{Q}} \in \mathbb{P}_{b_r, b_\theta}^{d_1 \times d_2/2}, \widetilde{\mathbf{L}}_1 \in \mathbb{C}^{d_1 \times r}, \text{ and } \widetilde{\mathbf{L}}_2 \in \mathbb{C}^{r \times d_2/2}.
\label{optimization_fourier}
\end{equation}
Here, $\sqrt{\mathbf{C}} \in\mathbb{R}^{d_1\times d_2/2}$ is the calibration data that aims to preserve the Frobenius norm error of the compressed layer output activations. Following common practice in calibration-based quantization~\citep{nagel2021white, dong2019hawq}, we construct $\mathbf{C}$ as the element-wise squared activation expectations over a small subset of training samples, \textit{i.e.,} $\mathbf{C}_{ij} = \mathbb{E}_{x \sim \mathcal{D}_{\text{calib}}} \left[ \left( a_i(x) \right)^2 \right]$, where $\mathcal{D}_{\text{calib}}$ denotes the calibration dataset (usually 256-2048 held-out samples) and $a_i(x)$ is the activation of neuron \( i \) under input \( x \). $\odot$ denotes the Hadamard product. $|\cdot|$ represents the modulus, \textit{i.e.,} magnitude of the complex matrix. To be note worthy, $\mathbb{P}_{b_r,b_\theta}^{d_1 \times d_2/2} \subset \mathbb{C}^{d_1 \times d_2/2}$ denotes the designed PolarQuant lattice codebooks used to quantize the complex matrix $\widetilde{\mathbf{Q}}$, using amplitude bitwidth $b_r$ and phase bitwidth $b_\theta$.

\subsection{Complex Matrix Optimization with Optional Calibration Data}
\label{ODC11}

Following iterative algorithms which have been shown to be effective for matrix approximation~\citep{guo2023lq}, we heuristically solve Eq.~\ref{optimization_fourier} via alternating between optimizing $\widetilde{\mathbf{L}}_1$, $\widetilde{\mathbf{L}}_2$, and $\widetilde{\mathbf{Q}}$:
\begin{equation}
\begin{aligned}
    \widetilde{\mathbf{L}}_1^{(t)}, \widetilde{\mathbf{L}}_2^{(t)} &\gets \operatorname{ODC}(\widetilde{\mathbf{W}} - \widetilde{\mathbf{Q}}^{(t-1)}, r), && \hspace{0.5cm} {\color{gray} = \argmin_{\mathrm{rank}(\mathbf{L}) \le r} \, \Bigl\| \bigl|\widetilde{\mathbf{W}} - (\widetilde{\mathbf{Q}}^{(t-1)} + \widetilde{\mathbf{L}})\bigr| \Bigr\|_{F},} \\
    \widetilde{\mathbf{Q}}^{(t)} &\gets \operatorname{PolarQuant}(\widetilde{\mathbf{W}} - \widetilde{\mathbf{L}}_1^{(t)} \widetilde{\mathbf{L}}_2^{(t)}), && \hspace{0.5cm} {\color{gray} \approx \argmin_{\mathbb{P}_{b_r, b_\theta}^{d_1 \times d_2/2}} \, \Bigl\| \bigl| \widetilde{\mathbf{W}} - (\widetilde{\mathbf{Q}} + \widetilde{\mathbf{L}}_1^{(t)}\widetilde{\mathbf{L}}_2^{(t)}) \bigl| \Bigl\|_{F},}
\end{aligned}
\label{eq:lpq-steps}
\end{equation}
where $\widetilde{\mathbf{Q}}^{(0)}$ is initialized to 0. Unlike SVD for real matrix, FourierSVD operates on complex matrix, which decomposes any complex matrix $\widetilde{\mathbf{R}}\in \mathbb{C}^{d_1 \times d2/2}$ into $\widetilde{\mathbf{U}} \boldsymbol{\Sigma} \widetilde{\mathbf{V}}^H$, where $\widetilde{\mathbf{U}}\in \mathbb{C}^{d_1 \times d_1}$ and $\widetilde{\mathbf{V}}\in \mathbb{C}^{d_2/2 \times d_2/2}$ are both unitary, \textit{i.e.,} $\widetilde{\mathbf{U}}^{H}\widetilde{\mathbf{U}}={\mathbf{I}}_{m}, \, \widetilde{\mathbf{V}}^{H}\widetilde{\mathbf{V}}={\mathbf{I}}_{m}$. $\widetilde{\mathbf{V}}^{H}$ denotes the conjugate transpose of $\widetilde{\mathbf{V}}$. $\boldsymbol{\Sigma} \in \mathbb{R}^{d_1 \times d_2/2}$ is real, diagonal and non-negative, $\boldsymbol{\Sigma} = \operatorname{diag}(\sigma_1,\dots,\sigma_{\min(m,n)}),\; \sigma_k \ge 0\text{ -- the singular values)}$. In this paper, we keep only top $r$ singular values to get $\widetilde{\mathbf{L}}_1$ and $\widetilde{\mathbf{L}}_2$ by leveraging the low-rank property of matrix, \textit{i.e.,} $\widetilde{\mathbf{L}}_1 \gets \widetilde{\mathbf{U}}_r \sqrt{\boldsymbol{\Sigma}_r}$, $\widetilde{\mathbf{L}}_2 \gets \sqrt{\boldsymbol{\Sigma}_r} \widetilde{\mathbf{V}}_r^H$. Although recent works~\citep{saha2024compressing, guo2023lq} demonstrate the importance of using calibration data for quantizing LLMs, the performance of these approaches highly relies on the availability and quality of calibration data, which may not always be accessible or representative. To mitigate this issue, we design an optional diagonal calibration (ODC) scheme to optimize the complex matrix optimization problem. Unlike its unweighted counterpart, calibration-weighted problem is in general intractable and in fact NP-hard. It is typically addressed through approximate methods, we consider the calibration-aware version of this approach by using a diagonal approximation of the calibration information matrix to weight the reconstruction objective:
\begin{equation}
\begin{aligned}
\widetilde{\mathbf{L}}_1, \widetilde{\mathbf{L}}_2
&= \argmin_{\widetilde{\mathbf{L}}_1, \widetilde{\mathbf{L}}_2}  \; \left\| \sqrt{\mathbf{C}} \odot \mathbf{E}\right\|_F %
&= \argmin_{\widetilde{\mathbf{L}}_1, \widetilde{\mathbf{L}}_2}  \; \left\| \mathbf{D}_{\text{row}} \mathbf{E}\mathbf{D}_{\text{col}}\right\|_F, 
\end{aligned}
\end{equation}
where given $\mathbf{E} := \bigl|\widetilde{\mathbf{W}} - (\widetilde{\mathbf{Q}} + \widetilde{\mathbf{L}}_1 \widetilde{\mathbf{L}}_2)\bigr|$, $\mathbf{D}_{\text{row}}$ is a diagonal matrix consists of row-means of $\sqrt{\mathbf{C}}$, and $\mathbf{D}_{\text{col}}$ is a diagonal matrix consisting of the column-means of $\sqrt{\mathbf{C}}$, \textit{i.e.,}
\begin{equation}
\begin{aligned}
\mathbf{D}_{\text{row}} {=}
\operatorname{diag}\left(\left[
\operatorname{avg}(\sqrt{\mathbf{C}_{1, \cdot}}), \dots, \operatorname{avg}(\sqrt{\mathbf{C}_{d_1, \cdot}})
\right]\right),
\mathbf{D}_{\text{col}} {=}
\operatorname{diag}\left(\left[
\operatorname{avg}(\sqrt{\mathbf{C}_{\cdot, 1}}), \dots,  \operatorname{avg}(\sqrt{\mathbf{C}_{\cdot, d_2/2}})
\right]\right).
\end{aligned}
\end{equation}
To this end, this problem can be solved by the FourierSVD algorithm:
\renewcommand{\SVD}{\mathrm{SVD}}
\begin{equation}
\begin{aligned}
\widetilde{\mathbf{U}}, \boldsymbol{\Sigma}, \widetilde{\mathbf{V}}^H \gets \FourierSVD(\mathbf{D}_{\text{row}} \widetilde{\mathbf{R}} \mathbf{D}_{\text{col}}), \,\,\,    \widetilde{\mathbf{L}}_1 \gets \mathbf{D}_{\text{row}}^{-1}\widetilde{\mathbf{U}}_r
\sqrt{\boldsymbol{\Sigma}_r} , \,\,\, \widetilde{\mathbf{L}}_2 \gets \sqrt{\boldsymbol{\Sigma}_r} \widetilde{\mathbf{V}}_r^H\mathbf{D}_{\text{col}}^{-1}.
\end{aligned}
\end{equation}
Here, we keep only top $r$ singular values of $\boldsymbol{\Sigma}$, \textit{i.e.,} $\boldsymbol{\Sigma}_r \gets \boldsymbol{\Sigma}[1:r, 1:r], \,
    \widetilde{\mathbf{U}}_r \gets \widetilde{\mathbf{U}}[:, 1:r], \, \widetilde{\mathbf{V}}_r^H \gets \widetilde{\mathbf{V}}^H[:, 1:r]$. We provide a detailed theoretical and empirical justification of ODC in Appendix~\ref{ODC}. The detailed computational process of the proposed ODC algorithm is described in Algorithm~\ref{alg:fouriersvd}. 
\begin{wrapfigure}[25]{r}{0.42\textwidth}
\centering
\vspace{-0.4cm}
\begin{algorithm}[H]
\label{alg:polarquant}
\fontsize{8.5}{8.5}\selectfont
\SetAlgoLined
\SetKwInOut{Input}{Input}
\SetKwInOut{Output}{Output}
\SetKw{KwDownTo}{downto}
\SetKw{KwAnd}{and}
\SetKw{KwTo}{to}
\caption{PolarQuant}
\Input{Complex residual matrix $\widetilde{\mathbf{R}} = \widetilde{\mathbf{W}} - \widetilde{\mathbf{L}}_1 \widetilde{\mathbf{L}}_2$, Amplitude bitwidth $b_r$, Phase bitwidth $b_\theta$}
{\textcolor{gray}{\# Extract real and imaginary parts}}\\
$\mathbf{X} \gets \operatorname{Re}(\widetilde{\mathbf{R}})$, $\mathbf{Y} \gets \operatorname{Im}(\widetilde{\mathbf{R}})$\\
{\textcolor{gray}{\# Convert to polar coordinates}}\\
\For{each element $(i,j)$}{
    $r_{i,j} \gets \sqrt{X_{i,j}^2 + Y_{i,j}^2}$\\
    $\theta_{i,j} \gets \operatorname{atan2}(Y_{i,j}, X_{i,j})$\\
}
{\textcolor{gray}{\# Compute quantization parameters}}\\
$r_{\max} \gets \max(r_{i,j})$\\
$\Delta_r \gets r_{\max}/(2^{b_r} - 1)$\\
$\Delta_\theta \gets 2\pi/(2^{b_\theta})$\\
{\textcolor{gray}{\# Quantize amplitude and phase}}\\
\For{each element $(i,j)$}{
    $q_{r,i,j} \gets \operatorname{round}(r_{i,j}/\Delta_r)$\\
    $q_{\theta,i,j} \gets \operatorname{round}((\theta_{i,j} + \pi)/\Delta_\theta)$\\
    $\hat{r}_{i,j} \gets q_{r,i,j} \cdot \Delta_r$\\
    $\hat{\theta}_{i,j} \gets q_{\theta,i,j} \cdot \Delta_\theta - \pi$\\
}
{\textcolor{gray}{\# Reconstruct complex matrix}}\\
\For{each element $(i,j)$}{
    $\widetilde{\mathbf{Q}}_{i,j} \gets \hat{r}_{i,j} \cdot e^{i\hat{\theta}_{i,j}}$\\
}
\Output{Quantized complex matrix $\widetilde{\mathbf{Q}}$}
\end{algorithm}
\end{wrapfigure}
After obtaining the low-rank approximation $\widetilde{\mathbf{L}}_1$ and $\widetilde{\mathbf{L}}_2$, the residual complex matrix $\widetilde{\mathbf{R}} = \widetilde{\mathbf{W}} - \widetilde{\mathbf{L}}_1\widetilde{\mathbf{L}}_2$ still contains non-negligible frequency-domain energy. To further compress this residual, we propose PolarQuant, a polar-coordinate-based quantization method tailored for complex matrices. Specifically, PolarQuant first decomposes each element of $\widetilde{\mathbf{R}}$ into its real and imaginary components, \textit{i.e.,} $\mathbf{X} = \operatorname{Re}(\widetilde{\mathbf{R}})$, $\mathbf{Y} = \operatorname{Im}(\widetilde{\mathbf{R}})$. Then, it converts each complex entry into its polar form, \textit{i.e.,}
\begin{equation}
\begin{aligned}
r_{ij} = \sqrt{X_{ij}^2 + Y_{ij}^2}, \quad \theta_{ij} = \text{atan2}(Y_{ij}, X_{ij}),
\end{aligned}
\end{equation}
where $r_{ij}$ and $\theta_{ij}$ represent the amplitude and phase of the residual frequency component at location $(i,j)$, respectively. Next, we compute uniform quantization step sizes for amplitude and phase,
\begin{equation}
\begin{aligned}
\Delta r = \frac{\max(r_{ij})}{2^{b_r} - 1}, \quad \Delta \theta = \frac{2\pi}{2^{b_\theta}},
\end{aligned}
\end{equation}
where $b_r$ and $b_\theta$ are the bit-widths for amplitude and phase, respectively. Each amplitude and phase value is then independently quantized, \textit{i.e.,} $
q_{r,i,j} = \text{round}(r_{i,j} / \Delta r), \quad q_{\theta,i,j} = \text{round}(\theta_{i,j} + \pi / \Delta_\theta).
$ Finally, the quantized complex matrix $\widetilde{\mathbf{Q}}$ is reconstructed as $
\widetilde{\mathbf{Q}}_{ij} = \hat{r}_{i,j} \cdot e^{i\hat{\theta}_{i,j}}.
$ This approach allows us to represent the residual matrix in a compact polar format while preserving the phase structure critical for cross-modal alignment. The entire process is summarized in Algorithm~\ref{alg:polarquant}.
We employ a simple stopping criterion where we keep track of the error $\epsilon_t$ and terminate the algorithm if the error increases, \textit{i.e.,} $\epsilon_t > \epsilon_{t-1}$. The proposed Fourier approximation algorithm is shown in Algorithm~\ref{alg:FA}.
\label{others}

\section{Experiments}
\subsection{Experimental Settings}
\textbf{Implementation Details.}
The proposed architecture employs pretrained CLIP-ViT-L/14~\citep{radford2021learning} as the vision encoder, which is post trained via the Fourier approximation. A two-layer MLP that compressed via Fourier approximation functions as the VLP. For the LLM, Qwen-2.5~\citep{hui2024qwen2} series models at different scales are utilized as the foundation for LLaVA-FA. Specifically, LLaVA-FA-2B and LLaVA-FA-1B are derived through compression from Qwen-2.5-7B and Qwen-2.5-3B, respectively. Moreover, LLaVA-FA-7B and LLaVA-FA-3B are derived through compression from InternLM-2-20B~\citep{cai2024internlm2} and LLaMA-3-8B~\citep{dubey2024llama}, respectively. LLaVA-FA is trained using 8 NVIDIA RTX 4090 GPUs. The detailed training strategy and hyperparameter are illustrated in Appendix~\ref{training details}. 

\textbf{Training Datasets.}
LLaVA-FA is trained on a diverse collection of vision-language datasets spanning pre-training (\textit{e.g.,} LLaVA-Pretrain~\citep{liu2023visual}, ShareGPT4V~\citep{chen2024sharegpt4v}), visual question answering across general (GQA~\citep{gqa}, VQA~\citep{vqav2}), text-centric (OCR-VQA~\citep{ocrvqa}, SynthDoG-EN~\citep{synthdog}), and specialized domains (ChartQA~\citep{chartqa}, DocVQA~\citep{docqa}), visual reasoning and instruction-following (MIMIC-IT~\citep{mimic}, LRV~\citep{lrv}), and large-scale web data (RefCOCO~\citep{yu2016modeling}, VisualGenome~\citep{krishna2017visual}). This comprehensive mixture covers natural images, documents, charts, diagrams, and synthetic visualizations, balancing general visual understanding with specialized capabilities in OCR and spatial reasoning. 

\textbf{Evaluation Benchmarks.}
We conduct experiments on three multimodal benchmarks-MME \citep{fu2024mmecomprehensiveevaluationbenchmark}, MMB \citep{liu2024mmbench}, and $\text{MMB}^\text{CN}$-to evaluate multimodal understanding and reasoning capabilities. Our evaluation also encompasses diverse VQA tasks across three domains: general visual understanding using VizWiz \citep{gurari2018vizwiz} and GQA \citep{gqa}; text-oriented recognition via TextVQA \citep{singh2019towards}; and scientific reasoning through ScienceQA \citep{lu2022learn}. Additionally, we assess model reliability using hallucination benchmarks including POPE \citep{li2023evaluating}, Object HalBench \citep{yu2024rlhf}, and MMHal-Bench \citep{sun2024aligning}.
\begin{table*}[!t]
\caption{Comparison with state-of-the-art LMMs on the commonly-used multimodal benchmarks for LMMs. \#Sample: Training data sample. \#Param: Trainable parameters. SQA$^\text{I}$: ScienceQA test, VQA$^\text{T}$: TextVQA val, MME: MME Benchmark, normalized to percentage, MMB: MMBench dev, MMB$^\text{CN}$: MMBench-Chinese dev. The optimal result is shown in bold, and the sub-optimal result is underlined. Our LLaVA-FA achieves the best average result for both.}
\tiny
  \label{tab:mainresult}
  \begin{center}
  \setlength{\tabcolsep}{2pt}
  \resizebox{0.999\textwidth}{!}{
  \begin{tabular}{l|l|cc|ccccccc|c}
    \toprule
      \textbf{Method} &\textbf{LLM} & \textbf{\#Sample} & \textbf{\#Param} & \textbf{GQA} & \textbf{VisWiz} & \textbf{SQA$^\text{I}$} & \textbf{VQA$^\text{T}$} & \textbf{MME} & \textbf{MMB} & \textbf{MMB$^\text{CN}$} & \textbf{AVG}\\
      
    \midrule
    {BLIP-2} &  {Vicuna-13B}  & 129M  & \multirow{8}{*}{\begin{tabular}[c]{@{}l@{}}$\geq$7B\end{tabular}} & {41.0} & {19.6} &  {61.0} &  {42.5} & {64.7} &  {-} & {-} & {-} \\
    
    {VILA-7B} &  {LLaMA-7B}  & 50M & {} & {62.3} &  {57.8} &  {68.2} & {64.4} &  {\underline{76.7}} &  {68.9} &  {61.7}  & {65.7} \\

     {CogVLM} &  {Vicuna-7B} &  1500M & {} & {64.9} & {-} &  {65.6} &  {\textbf{78.2}} & {71.8} &  {63.7} & {53.8} & {-} \\  
     
    {InstructBLIP} &  {Vicuna-13B}  & 130M  & {} & {49.5} & {33.4} &  {63.1} &  {50.7} & {60.6} &  {-} & {-} & {-} \\
    
    {Qwen-VL-Chat} & {Qwen-7B}  & 1450M & {} & {57.5} & {38.9} & {68.2} & {61.5} & {74.4} & {60.6} & {56.7} & 56.7 \\

    Deepseek-VL-7B & DLLM-7B & 2000M & {} & 61.3 & 49.9 & {\underline{74.0}} & 64.7  & {73.4} & \underline{74.1} & \textbf{72.8} & {67.2} \\
    
    {LLaVA-1.5-7B} &  {Vicuna-1.5-7B}  & 1.2M & {} & {62.0} &  {50.0} &  {66.8} & {58.2} &  {75.5} &  {64.3} &  {58.3} & 62.1 \\

    {LLaVA-NeXT} &  {Vicuna-1.5-13B}  & 1.3M & {} & {\underline{65.4}} & {\underline{60.5}} &  {73.6} &  {67.1}  &  {\textbf{78.7}} &  {70.4} &  {64.4} & \underline{68.5} \\
        \cellcolor{gray!20}{LLaVA-FA-7B} 
    & InternLM-2-20B\cellcolor{gray!20}{} 
    & \cellcolor{gray!20}{5M}  
    & \cellcolor{gray!20}\multirow{-6}{*} 
    & \cellcolor{gray!20}{\textbf{68.5}} 
    & \cellcolor{gray!20}{\textbf{62.0}} 
    & \cellcolor{gray!20}{\textbf{76.0}} 
    & \cellcolor{gray!20}{\underline{68.0}} 
    & \cellcolor{gray!20}{74.5} 
    & \cellcolor{gray!20}{\textbf{74.5}} 
    & \cellcolor{gray!20}{\underline{69.5}} 
    & \cellcolor{gray!20}{\textbf{70.4}} \\
    
    \midrule
    Imp-3B & Phi-2-2.7B  & 1.6M & \multirow{10}{*}{\begin{tabular}[c]{@{}l@{}}$\sim$3B\end{tabular}} & {\underline{63.5}} & {54.1} & {72.8} & {59.8} & \underline{72.3} & \textbf{72.9}  & 46.7 & 63.2 \\

    Bunny-3B & Phi-2-2.7B  & 2.7M & {} & {62.5} & {43.8} & {70.9} & {56.7} & \textbf{74.4} & 68.6  & 37.2 & {59.2} \\
    
    VILA-3B & LLaMA-2.7B  & 51M & {} & {61.5} & {53.5} & {69.0} & {60.4}  & 72.1 & 63.4  & 52.7 & 61.8 \\
    
    MobileVLM & MLLaMA-2.7B  & 1.3M & {} & 59.0 & - & 61.0 & 47.5 & 64.4 & 59.6  & - & {-} \\
    
    MobileVLM$^{v2}$ & MLLaMA-2.7B & 3.6M & {} & 61.1 & - & 70.0 & 57.5 & 72.0 & 63.2 & - & {-} \\

    MoE-LLaVA-3B & Phi-2-2.7B  & 2.2M & {} & {61.4} & {43.9} & {68.5} & {51.4}  & 71.1 & 65.2  & 41.8 & 57.6 \\
    
    MiniCPM-V & MiniCPM-2.4B & 570M & {} & {51.5} & {50.5} & {74.4} & {56.6} & 68.9 & 64.0  & 62.7 & {61.2} \\
    
    MiniCPM-V-2 & MiniCPM-2.4B & 570M & {} & {52.1} & {\underline{60.2}} & {\underline{76.3}} & \textbf{73.2} & 70.5 & 68.5  & \underline{67.2} & {\underline{66.9}} \\
        \cellcolor{gray!20}{LLaVA-FA-3B} 
        & LLaMA-3-8B\cellcolor{gray!20}{} 
        & \cellcolor{gray!20}{5M}  
        & \cellcolor{gray!20}\multirow{-6}{*} 
        & \cellcolor{gray!20}{\textbf{65.0}} 
        & \cellcolor{gray!20}{\textbf{62.5}} 
        & \cellcolor{gray!20}{\textbf{77.0}} 
        & \cellcolor{gray!20}{\underline{64.0}} 
        & \cellcolor{gray!20}{71.0} 
        & \cellcolor{gray!20}{\underline{70.5}} 
        & \cellcolor{gray!20}{\textbf{68.0}} 
        & \cellcolor{gray!20}{\textbf{68.3}}
         \\


    \midrule
    Imp-2B & Qwen-1.5-1.8B  & 1.6M & {} & \underline{61.9} & {39.6} & {\underline{66.1}} & {54.5} & 65.2 & 63.8  & {\underline{61.2}}  & {\underline{58.9}} \\
    
    Bunny-2B & Qwen-1.5-1.8B  & 2.7M & {} & {59.6} & {34.2} & {64.6} & {53.2}  & 65.0 & 59.1  & 58.5 & {56.3} \\

    Mini-Gemini-2B & Gemma-2B  & 2.7M & {} & 60.7 & \underline{41.5} & 63.1 & 56.2 & {\textbf{67.0}} & 59.8  & 51.3 & 57.1 \\
    
    MoE-LLaVA-2B & Qwen-1.5-1.8B  & 2.2M & {} & 61.5 & {32.6} & {63.1} & {48.0}  & 64.6 & 59.7  & 57.3  & {55.3} \\
    
    DeepSeek-VL-1.3B & DLLM-1.3B & 2000M & {} & 59.3 & 36.8 & {64.2} & {\underline{58.4}}  & {65.3} & {\underline{64.6}} & 61.0 & 58.5 \\

    \cellcolor{gray!20}{LLaVA-FA-2B} & \cellcolor{gray!20}{Qwen-2.5-7B} & \cellcolor{gray!20}5M  & \cellcolor{gray!20}\multirow{-6}{*}{\begin{tabular}[c]{@{}l@{}}$\sim$2B\end{tabular}} & \cellcolor{gray!20}{\textbf{62.1}} & \cellcolor{gray!20}{\textbf{41.7}} & \cellcolor{gray!20}{\textbf{68.2}} & \cellcolor{gray!20}\textbf{59.3} & \cellcolor{gray!20}\underline{66.6} & \cellcolor{gray!20}\textbf{66.7} & \cellcolor{gray!20}\textbf{61.7} & \cellcolor{gray!20}\textbf{60.9} \\

    \midrule  
    SPHINX-Tiny & TLLaMA-1.1B & 15M & {} & \textbf{58.0} & \underline{49.2} & \underline{21.5} & \underline{57.8}  & \underline{63.1} & \underline{56.6} & \underline{37.8} & \underline{49.2} \\

    \cellcolor{gray!20}{LLaVA-FA-1B}  & \cellcolor{gray!20}{Qwen-2.5-3B} & \cellcolor{gray!20}5M  & \cellcolor{gray!20}\multirow{-2}{*}{\begin{tabular}[c]{@{}l@{}}$\sim$1B\end{tabular}} & \cellcolor{gray!20}\underline{56.7} & \cellcolor{gray!20}{\textbf{49.7}} & \cellcolor{gray!20}\textbf{61.3} & \cellcolor{gray!20}\textbf{57.9} & \cellcolor{gray!20}\textbf{63.3} & \cellcolor{gray!20}\textbf{58.3} & \cellcolor{gray!20}\textbf{49.4} & \cellcolor{gray!20}\textbf{56.7} \\

    \bottomrule
\end{tabular}
}\end{center}
\end{table*}

\subsection{Main Results}
This section evaluates LLaVA-FA across two critical dimensions: model performance and computational efficiency. Performance is assessed through comprehensive results on comprehension-oriented benchmarks (Tab.~\ref{tab:mainresult}) and hallucination-oriented evaluations (Tab.~\ref{tab:hal}). Efficiency analysis examines training data requirements and model parameters. 
\textit{\textbf{Furthermore, Appendix~\ref{sec:latency_memory} evaluates the real-world inference overhead introduced by DFT/IDFT and PolarQuant operations, demonstrating that their latency contribution is minimal (≤3.2\% of TTFT) and more than compensated by compression gains.}}

\textbf{Comprehension-Oriented Benchmarks.}
As shown in Tab.~\ref{tab:mainresult}, LLaVA-FA achieves competitive results across comprehension-oriented benchmarks. With only 5M training samples, the 2B variant obtains an average score of 60.8\%, surpassing MoE-LLaVA-2B~\citep{lin2024moe} by 5.5\% and Mini-Gemini-2B~\citep{li2024minigeminiminingpotentialmultimodality} by 3.7\%. Despite its compact size, LLaVA-FA-2B approaches the performance of LLaVA-1.5-7B~\citep{liu2024improved} (62.1\% \textit{vs.} 60.8\%) while using fewer parameters. The 1B variant matches SPHINX-Tiny's~\citep{liu2024sphinx} performance (49.4\% \textit{vs.} 49.2\%) with substantially reduced training data (5M \textit{vs.} 15M). Notably, LLaVA-FA's strong performance on $\text{VQA}^\text{T}$ and MMB benchmarks, demonstrating its effectiveness in multimodal understanding tasks. We further evaluate our method on substantially larger backbones: LLaVA-FA-3B, constructed from LLaMA-3-8B~\citep{dubey2024llama}, and LLaVA-FA-7B, constructed from InternLM-2-20B~\citep{cai2024internlm2}. Importantly, both models are compressed using exactly the same Fourier approximation pipeline, ensuring strict fairness and demonstrating that our method generalizes without any modification or heuristic tuning. Experimental results show that Fourier-domain compression continues to yield strong performance across diverse benchmarks, with even smaller relative degradation at the 8B/20B scales. These results verify that our Fourier approximation method seamlessly extends to LLMs far beyond Qwen-2.5, maintaining both accuracy and robustness while preserving the same compression pipeline across all model sizes. This confirms that the proposed method is truly model-family agnostic and highly scalable.

\textbf{Hallucination-Oriented Benchmarks.}
As shown in Tab. \ref{tab:hal}, LLaVA-FA demonstrates exceptional performance in mitigating hallucination, achieving competitive results despite its compact 2B parameter size. It can be attributed to the effectiveness of our approach from two aspects: Firstly, by maintaining high factual accuracy with an F1 score of 87.5\% on POPE, LLaVA-FA ensures reliable and grounded visual understanding. Secondly, by achieving remarkably low hallucination rates (11.2\% response-level and 7.7\% mention-level on Object HalBench), it significantly reduces the generation of incorrect or unfounded information. Notably, LLaVA-FA outperforms other 2B-scale models including MiniCPM-V-2 \citep{yao2024minicpmvgpt4vlevelmllm} by 3.3\% in response-level hallucination rate and Mini-Gemini-2B \citep{li2024minigeminiminingpotentialmultimodality} by 18.5\%. Furthermore, it even surpasses several larger 7B models such as VCD (48.8\% response-level) and POVID (48.1\% response-level) by over 37\% on Object HalBench, demonstrating that our approach enables efficient hallucination mitigation without requiring extensive model scaling. The strong performance on hallucination benchmarks can be explained by the properties of the proposed Fourier-domain decomposition. Specifically, applying the Fourier transform reduces cross-modal redundancy and produces a more compact singular value distribution. As discussed and supported by Lemma~\ref{lem:svd}, the low-rank truncation in the frequency domain introduces a smaller approximation error than in the spatial domain under the same rank. This leads to more stable intermediate representations and reduces the error propagation that often triggers hallucination in multimodal reasoning~\citep{he2025evaluating, sun2024aligning}.

\begin{table*}[!t]
\tiny
\caption{Comparison with state-of-the-art LMMs on the hallucination benchmarks. We compare our proposed LLaVA-FA-2B \colorbox[HTML]{F2E6F0}{} with SFT-based works \colorbox[HTML]{CFE2F3}{} and RLHF-based works \colorbox[HTML]{D9EAD3}{}. Hall: Hallucination Rate  Resp: response-level hallucination rate, Ment: mention-level hallucination rate. The optimal result is in bold, and the sub-optimal result is underlined.}
\begin{center}
\setlength{\tabcolsep}{7pt}
\resizebox{0.999\textwidth}{!}{{
\begin{tabular}{l|lc|ccccc}
\toprule

\multirow{2}{*}{\textbf{Model}} & \multirow{2}{*}{\textbf{LLM}} & \multirow{2}{*}{\textbf{\#Param}} & \multicolumn{2}{c}{\textbf{Object HalBench}} & \textbf{POPE} & \multicolumn{2}{c}{\textbf{MMHal-Bench}} \\

\cmidrule(lr){4-5} \cmidrule(lr){6-6} \cmidrule(lr){7-8}
     
& {} & {}  & Resp $\downarrow$ & Ment $\downarrow$ & F1 $\uparrow$ & Score $\uparrow$ & Hall $\downarrow$ \\



\midrule
\rowcolor[HTML]{CFE2F3} 
Qwen-VL-Chat & {Qwen-7B} & 9.6B &  40.4 & 20.7 & 74.9 & 2.76 & 38.5 \\
\rowcolor[HTML]{CFE2F3} 
LLaVA-1.5-7B & {Vicuna-7B} & 7B & 53.6 & 25.2 & 86.1 & 2.36 & 51.0   \\
\rowcolor[HTML]{D9EAD3} 
VCD  & {Vicuna-1.5-7B}& 7B  & 48.8 & 24.3 & {84.5} & 2.12 & 54.2  \\
\rowcolor[HTML]{D9EAD3} 
OPERA & {Vicuna-1.5-7B} & 7B  & 45.1 & 22.3 & {85.4} & 2.15 & 54.2   \\
\rowcolor[HTML]{D9EAD3} 
HA-DPO & {Vicuna-1.5-7B} & 7B  & 39.9 & 19.9 & {86.8} & 1.98 & 60.4  \\
\rowcolor[HTML]{D9EAD3} 
POVID  & {Vicuna-1.5-7B} & 7B  & 48.1 & 24.4 & {86.3} & 2.08 & 56.2  \\
\rowcolor[HTML]{D9EAD3} 
LLaVA-RLHF & {Vicuna-1.5-13B} & 13B  & 38.1 & 18.9 & {82.7} & 2.02 & 62.5  \\
\rowcolor[HTML]{D9EAD3} 
LURE   & {Vicuna-1.5-7B} & 7B  & 27.7 & 17.3 & {-} & 1.64 & 60.4  \\

\rowcolor[HTML]{D9EAD3} 
RLHF-V  & {Vicuna-13B} & 13B  & {12.2} & {7.5} & {86.2} & 2.45 & 51.0 \\
\rowcolor[HTML]{D9EAD3} 
RLAIF-V & {Vicuna-1.5-7B} &  7B   & {8.5} & {4.3} & {-} & {3.06} & {29.2} \\

\midrule

\rowcolor[HTML]{CFE2F3} 
MiniCPM-V & {MiniCPM-2.4B} & 2.8B  & {21.6}  & {11.5} & {79.5} & \underline{3.70} &  {24.9} \\
\rowcolor[HTML]{CFE2F3} 
MiniCPM-V-2 & {MiniCPM-2.4B} & 2.8B  & {\underline{14.5}}  & {\underline{7.8}} & {\underline{86.3}} & \textbf{4.09} &  {\underline{18.2}} \\
\rowcolor[HTML]{CFE2F3} 
Mini-Gemini-2B & {Gemma-2B} & 2B  & 29.7 & 21.1 & 85.6 & 2.83 & 18.8 \\
\rowcolor[HTML]{CFE2F3} 
Bunny-2B & {Qwen-1.5-1.8} & 2.2B  & {50.2}  & {23.4} & {85.8} & 2.72 & 19.3   \\

\rowcolor[HTML]{F2E6F0} 
{LLaVA-FA-2B} & {Qwen-2.5-7B} &  2.2B   & \textbf{11.2} & \textbf{7.7} & \textbf{87.5} & {2.79} & \textbf{17.5}  \\


\bottomrule
    \end{tabular}
    }
}
\label{tab:hal}
\end{center}
\vspace{-2em}
\end{table*}

\textbf{Floating Point Operations and Latency}
\label{latency}
In practice, we evaluate the FLOPs (Floating Point Operations) and latency of LLaVA-FA under varying numbers of vision tokens using \textit{calflops} and our inference benchmark, both conducted on eight RTX 4090 GPUs. Specifically, we measure FLOPs, Time to First Token (TTFT), and KV cache usage within the same multi-GPU environment. As illustrated in Tab.~\ref{tab:flops} and Fig.~\ref{fig:latency}, LLaVA-FA achieves a substantial reduction in computational cost compared to Imp-2B. Furthermore, our method lowers KV cache usage by 83.3\% and improves inference speed by 61.1\% in TTFT, outperforming Imp-2B under equivalent visual token counts. These improvements highlight the efficiency of Fourier approximation approach, which is crucial for enabling real-time deployment of LMMs on resource-constrained devices.



\begin{minipage}[!t]{0.48\textwidth}
    \centering
    \includegraphics[width=1\linewidth]{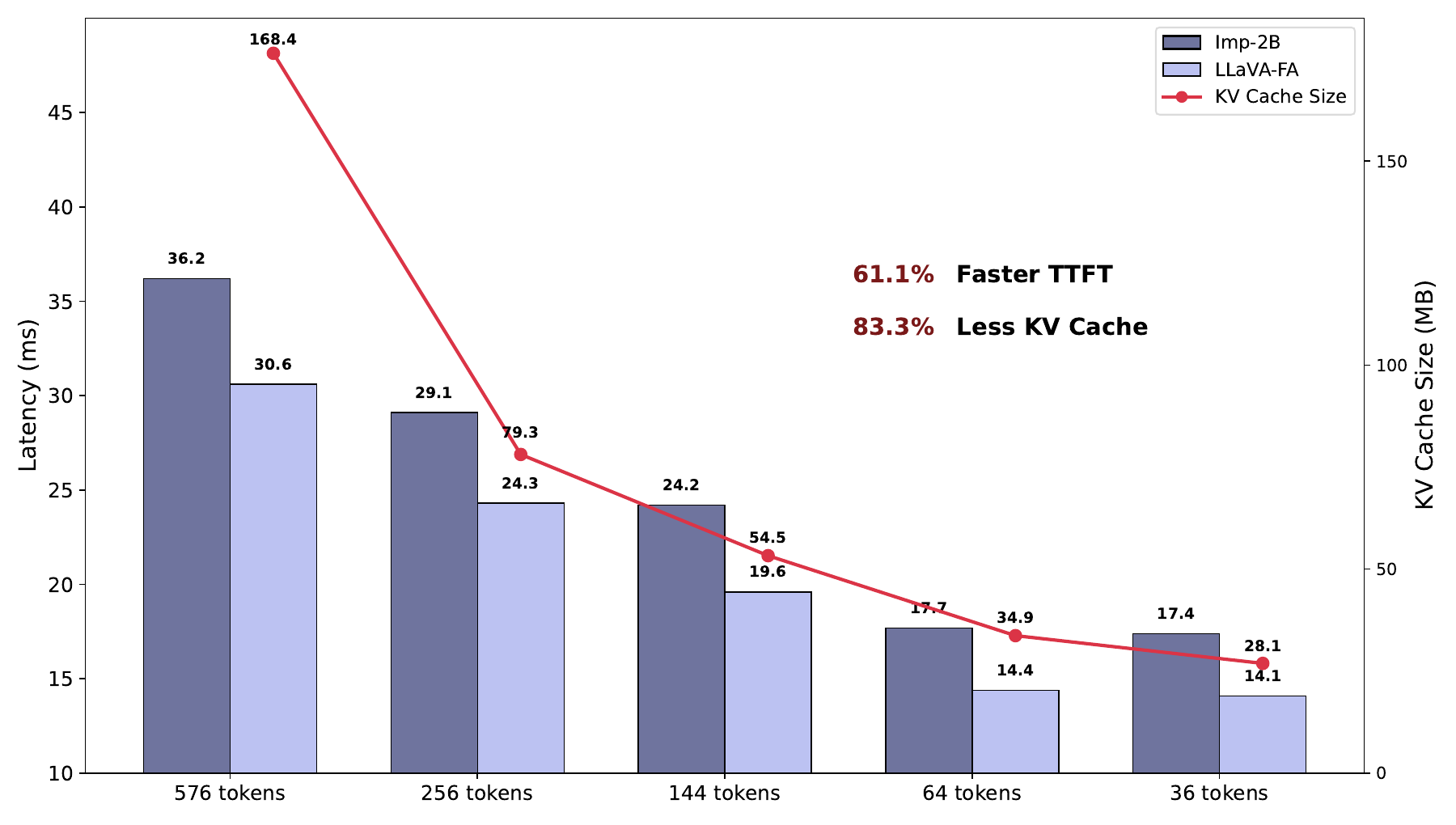}
    \captionof{figure}{Latency and KV cache usage of LLaVA-FA.}
    \label{fig:latency}
\end{minipage}
\hfill
\begin{minipage}[!t]{0.48\textwidth}
    \centering
    \captionof{table}{Flops and latency comparisons of LLaVA-FA. \textbf{\#I} represents the number of image tokens per image.}
    \label{tab:flops}
    \vskip 0.1in
    \setlength{\tabcolsep}{0.8mm}
    \renewcommand{\arraystretch}{1.1}
    \small
    \begin{tabular}{lc|cc}
        \toprule
        \bf Model & \bf \#I & \bf FLOPs (T) & \bf Latency \bf  (ms) \\
        \midrule
        Imp-2B & 576 & 2.14 & 36.2 \\
        Bunny-2B & 576 & 2.81 & 40.1 \\
        Mini-Gemini-2B & 576 & 2.95 & 41.5 \\
        MoE-LLaVA-2B & 576 &  2.48 & 39.3 \\
        DeepSeek-VL-1.3B & 576 &  2.01 &  35.1 \\
        \bf {LLaVA-FA-2B} & 576 & \bf 1.93 & \bf 30.6 \\
        \bottomrule
    \end{tabular}
\end{minipage}

\subsection{Ablation Study}
\label{ablation}
\textbf{Impact of Rank Choice.}
We explore the effect of rank choice in LoRA adaptation on model performance across vision-language benchmarks. As shown in Tab. \ref{tab:rank_ablation}, increasing the rank from 64 to 256 consistently improves performance, with LLaVA-FA-2B showing stronger sensitivity (+1.9\% average gain) compared to LLaVA-FA-1B (+1.3\% average gain). These results indicate that higher rank values enable better adaptation capacity, especially for larger base models.

\textbf{Choice of Bit-widths.}
Tab. \ref{tab:br_bt_choice} shows that raising amplitude and phase bit-widths (\(b_r,b_\theta\)) from 2 to 4 bits improves accuracy by 4.6\% while still achieving 11.2\(\times\) compression and the lowest hallucination rate (11.2\%).  
Beyond 4 bits the gains plateau. We therefore adopt \(b_r\!=\!b_\theta\!=\!4\) as the default, balancing quality, compression and inference speed.

\begin{table}[htbp]
\centering
\begin{minipage}[t]{0.48\textwidth}
\centering
\caption{Performance comparison with different ranks.}
\label{tab:rank_ablation}
{\footnotesize
\begin{tabular}{cc|c}
\toprule
    \textbf{Model} & \textbf{Rank} & \textbf{Avg Acc $\uparrow$} \\ \midrule

  \multirow{3}[0]{*}{LLaVA-FA-1B} & {64}  & {55.3}  \\
  {} & {128} & \underline{56.3} \\
  {} & {256} & \textbf{56.6} \\
    \midrule

 \multirow{3}[0]{*}{LLaVA-FA-2B} & {64} & {58.9} \\
  {} & {128}  & \underline{59.7}\\
  {} & {256}  & \textbf{60.8}\\
\bottomrule
\end{tabular}
}
\end{minipage}
\hfill
\begin{minipage}[t]{0.48\textwidth}
\centering
\caption{Impact of $b_r$ and $b_\theta$ on model performance and compression ratio (CR).}
\label{tab:br_bt_choice}
{\footnotesize
\begin{tabular}{ccccc}
\toprule
\textbf{$b_r$} &\textbf{$b_\theta$} & \textbf{Avg Acc $\uparrow$} & \textbf{CR $\uparrow$} & \textbf{Hall $\downarrow$} \\
\midrule
2 & 2 & 56.3 & \textbf{14.5} & 18.7 \\
3 & 3 & 59.1 & \underline{12.8} & 13.4 \\
4 & 4 & 60.9 & 11.2 & 11.2 \\
5 & 4 & \underline{61.0} & 10.1 & \underline{10.9} \\
4 & 5 & 60.9 & 10.8 & 11.0 \\
6 & 6 & \textbf{61.2} & 8.9 & \textbf{10.5} \\
\bottomrule
\end{tabular}
}
\end{minipage}
\end{table}


\vspace{-1em}
\section{Conclusion}
In this paper, we present LLaVA-FA, a novel efficient LMM that compresses weight matrices via joint low-rank plus quantization approximation in the frequency domain. By leveraging Fourier de-correlation and conjugate symmetry, our method delivers compact yet accurate representations that eliminates the need for large-scale calibration data. Extensive experiments show that LLaVA-FA outperforms existing efficient LMMs on multiple benchmarks while maintaining minimal activated parameters and low computational costs, offering a practical solution for deploying LMMs in resource-constrained scenarios.


\bibliography{iclr2026_conference}
\bibliographystyle{iclr2026_conference}

\appendix
\section{The Use of Large Language Models (LLMs)}
In the preparation of this manuscript, the authors utilize LLMs, specifically GPT-4, to assist with polishing the writing. The LLM is used solely for improving grammar, clarity, and readability of the text, without altering the technical content, methodology, or scientific contributions. All ideas, results, and interpretations presented in this paper are solely those of the authors. No part of the paper, including the main text or supplementary materials, is generated by the LLM.
\section{Theoretical Analysis}
\subsection{Mathematical Properties of 2D-DFT}
\label{app:fourier}
In this section, we rigorously establish two key properties of the two-dimensional Discrete Fourier Transform (2D-DFT) that motivate its use in LLaVA-FA: (1) \emph{de-correlation} of frequency-domain representations and (2) \emph{conjugate symmetry} for real-valued spatial matrices.

\textit{\textbf{Proof for De-correlation}}

Let $\mathbf{W}\in\mathbb{R}^{d_{1}\times d_{2}}$ be a zero-mean weakly-stationary random field:
\begin{equation}
\mathbb{E}[W_{m,n}]=0,\qquad
\mathbb{E}[W_{m,n}W_{m',n'}]=R_{m-m',n-n'}.
\end{equation}
We define its 2D-DFT by
\begin{equation}
\widetilde{W}_{u,v}=
\sum_{m=0}^{d_{1}-1}
\sum_{n=0}^{d_{2}-1}
W_{m,n}\,
e^{-j2\pi\left(\frac{u m}{d_{1}}+\frac{v n}{d_{2}}\right)}.
\end{equation}

\begin{theorem}
\label{thm:decor}
The frequency components are asymptotically uncorrelated, \textit{i.e.,}
\begin{equation}
\mathrm{Cov}\,\!\bigl(\widetilde{W}_{u,v},\,\widetilde{W}_{u',v'}\bigr)
=d_{1}d_{2}\,\delta_{u,u'}\delta_{v,v'}\,\mathcal{P}_{u,v},
\end{equation}
where $\delta_{u,u'}$ is the Kronecker delta, which equals to 1 if $u=u'$ and 0 otherwise. The power-spectral density is
\begin{equation}
\mathcal{P}_{u,v}\triangleq
\sum_{a=-(d_{1}-1)}^{d_{1}-1}
\sum_{b=-(d_{2}-1)}^{d_{2}-1}
R_{a,b}\,
e^{-j2\pi\left(\frac{u a}{d_{1}}+\frac{v b}{d_{2}}\right)}.
\end{equation}
\end{theorem}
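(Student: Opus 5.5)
The plan is to compute the covariance directly from the definition of the 2D-DFT and reduce it to a sum over lags. Since $\mathbf{W}$ is real and zero-mean, $\mathrm{Cov}(\widetilde{W}_{u,v},\widetilde{W}_{u',v'})=\mathbb{E}\bigl[\widetilde{W}_{u,v}\overline{\widetilde{W}_{u',v'}}\bigr]$. Expanding both transforms and using $\mathbb{E}[W_{m,n}W_{m',n'}]=R_{m-m',n-n'}$ gives
\begin{equation}
\mathrm{Cov}\bigl(\widetilde{W}_{u,v},\widetilde{W}_{u',v'}\bigr)=\sum_{m,m'}\sum_{n,n'}R_{m-m',n-n'}\,e^{-j2\pi\left(\frac{um-u'm'}{d_1}+\frac{vn-v'n'}{d_2}\right)}.
\end{equation}
I would then substitute $a=m-m'$, $b=n-n'$ and write $um-u'm'=ua+(u-u')m'$, with the analogous identity for $n$. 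This splits the phase into a lag factor $e^{-j2\pi(ua/d_1+vb/d_2)}$ times a factor $e^{-j2\pi((u-u')m'/d_1+(v-v')n'/d_2)}$. The inner sum over $m'$ then runs only over the set of $m'$ with $0\le m'$ and $m'+a\le d_1-1$, and similarly for $n'$.

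The cleanest route is to treat two regimes separately.

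\emph{Circular case.} First suppose $R$ is $(d_1,d_2)$-periodic, i.e.\ the field is circularly stationary on the discrete torus. Then the index set can be taken modulo $d_1,d_2$, so the $m'$-sum runs over all of $\mathbb{Z}_{d_1}$ and the $n'$-sum over all of $\mathbb{Z}_{d_2}$. Their geometric sums equal $d_1\delta_{u,u'}$ and $d_2\delta_{v,v'}$. What remains is exactly $d_1d_2\,\delta_{u,u'}\delta_{v,v'}$ times the periodic power spectrum, which gives the stated identity exactly.

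\emph{General case.} For a general weakly stationary field the statement can only hold asymptotically, which is how I would interpret ``asymptotically uncorrelated''. I would assume absolute summability, $\sum_{a,b}|R_{a,b}|<\infty$, and prove two estimates.
\begin{itemize}
\item On the diagonal, the $m'$- and $n'$-sums have no oscillation and count $(d_1-|a|)(d_2-|b|)$ terms. Hence $\mathrm{Var}(\widetilde{W}_{u,v})/(d_1d_2)$ equals $\mathcal{P}_{u,v}$ weighted by the Fejér-type factor $(1-|a|/d_1)(1-|b|/d_2)$. By dominated convergence this differs from $\mathcal{P}_{u,v}$ by $o(1)$.
\item Off the diagonal, a truncated geometric sum of a nontrivial $d_1$-th root of unity over a run of $d_1-|a|$ consecutive indices equals minus the sum over the complementary $|a|$ indices. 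Its modulus is therefore at most $|a|$, and likewise in $n$. This bounds the normalized covariance by $\sum_{a,b}|R_{a,b}|\bigl(\min(|a|,d_1)/d_1+\min(|b|,d_2)/d_2\bigr)$, which tends to $0$ by dominated convergence.
\end{itemize}
Together these show that $\mathrm{Cov}(\widetilde{W}_{u,v},\widetilde{W}_{u',v'})=d_1d_2\,\delta_{u,u'}\delta_{v,v'}\,\mathcal{P}_{u,v}+o(d_1d_2)$.

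The main obstacle is the precise sense of the statement rather than the algebra. Read literally for finite $d_1,d_2$ with the finite-lag $\mathcal{P}_{u,v}$ defined above, the identity is false: the edge effects produce the triangular weights and nonzero off-diagonal terms. I will therefore state explicitly that the exact equality holds under circular stationarity, and that otherwise it holds up to $o(d_1d_2)$ under summability of $R$. The delicate step is the off-diagonal bound on the truncated root-of-unity sums, because that is exactly where the finite edge effects enter.
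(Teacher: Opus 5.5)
You begin exactly as the paper does: expand, insert $R_{m-m',n-n'}$, and substitute $a=m-m'$, $b=n-n'$. You diverge at the key step, and your general-case argument is correct.

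The paper pulls out $\sum_{m',n'}e^{-j2\pi((u-u')m'/d_1+(v-v')n'/d_2)}$ as a full-grid geometric sum equal to $d_1d_2\,\delta_{u,u'}\delta_{v,v'}$ for every lag $(a,b)$. That ignores the fact that the admissible $m'$ depend on $a$, which is precisely the edge effect you isolate. Its derivation of an exact identity is therefore only formal.

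Your two estimates give a true statement instead: the Fej\'er-weight estimate on the diagonal, and the bound $|a|$ on truncated root-of-unity sums off the diagonal. Together they yield $\mathrm{Cov}(\widetilde{W}_{u,v},\widetilde{W}_{u',v'})=d_1d_2\,\delta_{u,u'}\delta_{v,v'}\,\mathcal{P}_{u,v}+o(d_1d_2)$, uniformly in the frequencies. The price is hypotheses the paper never states: a field on $\mathbb{Z}^2$ observed on growing windows, with $\sum_{a,b}|R_{a,b}|<\infty$.

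One small slip: for $a<0$ the admissible set is $-a\le m'\le d_1-1$, not the set you describe. The count $d_1-|a|$ you use later is right.

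\textbf{The circular case contains a false claim.} It does not yield ``the stated identity exactly.''
\begin{itemize}
\item With $R$ periodic, the lags and the base indices $(m',n')$ all live in $\mathbb{Z}_{d_1}\times\mathbb{Z}_{d_2}$. The computation therefore produces the one-period spectrum $\mathcal{P}^{\circ}_{u,v}=\sum_{a=0}^{d_1-1}\sum_{b=0}^{d_2-1}R_{a,b}e^{-j2\pi(ua/d_1+vb/d_2)}$.
\item The stated $\mathcal{P}_{u,v}$ instead sums over $|a|\le d_1-1$, $|b|\le d_2-1$. It thus counts every nonzero residue class twice in each coordinate.
\item Counterexample: take $W_{m,n}\equiv X$ with $\mathrm{Var}(X)=1$. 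This field is circularly stationary with $R\equiv 1$, and $\widetilde{W}_{u,v}=d_1d_2X\,\delta_{u,0}\delta_{v,0}$. The stated formula would need $\mathcal{P}_{0,0}=d_1d_2$, but in fact $\mathcal{P}_{0,0}=(2d_1-1)(2d_2-1)$. It also gives $\mathcal{P}_{u,0}=-(2d_2-1)<0$ for $u\neq 0$, i.e.\ a negative variance.
\item A two-entry field already fails: $d_1=2$, $d_2=1$, unit variances, correlation $\rho$, which is automatically circular. Here $\mathrm{Var}(\widetilde{W}_{0,0})=2+2\rho$, whereas $d_1d_2\mathcal{P}_{0,0}=2+4\rho$.
\end{itemize}
The paper's factorization commits exactly this double count.

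\textbf{The fix} is to state the exact circular result with $\mathcal{P}^{\circ}$. It also falls out of your general formula. For $1\le a\le d_1-1$, the lags $a$ and $a-d_1$ carry the same $R$ and the same phase factor. Their admissible ranges $\{0,\dots,d_1-1-a\}$ and $\{d_1-a,\dots,d_1-1\}$ tile $\{0,\dots,d_1-1\}$. So the paired truncated sums recombine into $d_1\delta_{u,u'}$, and likewise in $b$.

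With that correction, your proposal proves correct versions of a statement that, as you suspected, is false as literally written.
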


\begin{proof}
By definition,
\begin{align}
\mathrm{Cov}\,\!\bigl(\widetilde{W}_{u,v},\,\widetilde{W}_{u',v'}\bigr)
&=\mathbb{E}\!\left[
\widetilde{W}_{u,v}\,
\overline{\widetilde{W}}_{u',v'}
\right]\\[2pt]
&=\sum_{m,n}\sum_{m',n'}
\mathbb{E}[W_{m,n}W_{m',n'}]\,
e^{-j2\pi\left(
\frac{u m}{d_{1}}+\frac{v n}{d_{2}}
\right)}\,
e^{j2\pi\left(
\frac{u' m'}{d_{1}}+\frac{v' n'}{d_{2}}
\right)}.
\end{align}
Insert $R_{m-m',n-n'}$ and change variables $a=m-m'$, $b=n-n'$:
\begin{align}
=\sum_{a,b}R_{a,b}\,
e^{-j2\pi\left(
\frac{u a}{d_{1}}+\frac{v b}{d_{2}}
\right)}
\underbrace{
\sum_{m',n'}
e^{-j2\pi\left(
\frac{(u-u')m'}{d_{1}}+\frac{(v-v')n'}{d_{2}}
\right)}
}_{d_{1}d_{2}\,\delta_{u,u'}\delta_{v,v'}}.
\end{align}
The inner sum equals $d_{1}d_{2}$ when $(u,v)=(u',v')$ and zero otherwise, giving
\begin{equation}
\mathrm{Cov}\,\!\bigl(\widetilde{W}_{u,v},\,\widetilde{W}_{u',v'}\bigr)
=d_{1}d_{2}\,\delta_{u,u'}\delta_{v,v'}\,\mathcal{P}_{u,v}.
\end{equation}
\end{proof}
This completes the proof. Theorem\,\ref{thm:decor} shows that \emph{energy is compacted into mutually independent coefficients}, legitimising aggressive low-rank truncation in LLaVA-FA: discarding small-variance components introduces minimal reconstruction error.

\textit{\textbf{Proof for Faster Singular-Value Decay in the Frequency Domain}}

Here, we provide a rigorous justification that the singular values of a weight matrix decay faster after 2D-DFT than in the spatial domain.
Let $\mathbf{W}\in\mathbb{R}^{d_{1}\times d_{2}}$ be a zero-mean weakly-stationary random field with autocorrelation
\begin{equation}
R_{a,b}=\mathbb{E}[W_{m,n}W_{m+a,n+b}],
\end{equation}
and denote its 2D-DFT by
\begin{equation}
\widetilde{W}_{u,v}= \sum_{m=0}^{d_{1}-1}\sum_{n=0}^{d_{2}-1}
W_{m,n}\,e^{-j2\pi\left(\frac{um}{d_{1}}+\frac{vn}{d_{2}}\right)}.
\end{equation}
Theorem~\ref{thm:decor} states that the frequency components are asymptotically uncorrelated, \textit{i.e.,}
\begin{equation}
\mathrm{Cov}\!\left(\widetilde{W}_{u,v},\,\widetilde{W}_{u',v'}\right)
=d_{1}d_{2}\,\delta_{u,u'}\delta_{v,v'}\mathcal{P}_{u,v},
\quad\text{where}\quad
\mathcal{P}_{u,v}= \sum_{a,b}R_{a,b}\,
e^{-j2\pi\left(\frac{ua}{d_{1}}+\frac{vb}{d_{2}}\right)}.
\end{equation}
Hence the covariance matrix of $\texttt{vec}(\widetilde{\mathbf{W}})$ is diagonal with entries proportional to the power-spectral density $\mathcal{P}_{u,v}$.
For natural images and learned convolutional kernels, $\mathcal{P}_{u,v}$ is a smooth and rapidly decreasing function of frequency, typically satisfying
\begin{equation}
\mathcal{P}_{u,v}\le C'\bigl(1+u^{2}+v^{2}\bigr)^{-\alpha},
\qquad\alpha>1.
\end{equation}
Here, $\alpha$ is the exponential decay rate in the frequency domain. $C'>0$ is a positive constant. Ordering the diagonal entries of the covariance matrix gives the \emph{expected squared singular values} $\mathbb{E}[\sigma_{k}^{2}(\widetilde{\mathbf{W}})]$.
By the Szeg\H{o} theorem on Toeplitz--Fourier operators~\citep{gray2006toeplitz}, the empirical spectral distribution of $\widetilde{\mathbf{W}}$ converges to a limit whose tails are governed by the decay rate of $\mathcal{P}_{u,v}$. Consequently,
\begin{equation}
\mathbb{E}[\sigma_{k}^{2}(\widetilde{\mathbf{W}})]
\le C\,k^{-\alpha}.
\end{equation}
Where $k$ is the index for singular values (rank). In contrast, the spatial-domain covariance matrix is not diagonal. Its eigenvalues decay as
\begin{equation}
\mathbb{E}[\sigma_{k}^{2}(\mathbf{W})]\ge c\,k^{-\beta},
\qquad\beta\le\alpha/2,
\end{equation}
because off-diagonal correlations slow the decay. We take square roots yields the singular-value bounds, \textit{i.e.,}
\begin{equation}
\mathbb{E}[\sigma_{k}(\widetilde{\mathbf{W}})]
\le\sqrt{C}\,k^{-\alpha/2},
\qquad
\mathbb{E}[\sigma_{k}(\mathbf{W})]
\ge\sqrt{c}\,k^{-\beta}.
\end{equation}
Therefore for any $k\ge 2$, we have
\begin{equation}
\frac{\mathbb{E}[\sigma_{k}(\widetilde{\mathbf{W}})]}
{\mathbb{E}[\sigma_{k}(\mathbf{W})]}
\le\frac{\sqrt{C}}{\sqrt{c}}\,k^{-(\alpha/2-\beta)},
\end{equation}
where the exponent $\alpha/2-\beta\ge 1$ under typical smoothness assumptions. This inequality shows that the \emph{frequency-domain singular values decay strictly faster} than their spatial counterparts, enabling a lower-rank approximation for the same reconstruction error.

\textit{\textbf{Proof for Conjugate Symmetry}}

We give a full derivation of conjugate symmetry in this section.
Let $\mathbf{W}\in\mathbb{R}^{d_{1}\times d_{2}}$ be a real-valued matrix and its 2D-DFT can be represented as:
\begin{equation}
\widetilde{W}_{u,v}=
\sum_{m=0}^{d_{1}-1}
\sum_{n=0}^{d_{2}-1}
W_{m,n}\,
e^{-j2\pi\left(\frac{u m}{d_{1}}+\frac{v n}{d_{2}}\right)}
\label{eq:dft}
\end{equation}


\begin{theorem}
\label{thm:conj}
For a real-valued matrix $\mathbf{W}\!\in\!\mathbb{R}^{d_{1}\times d_{2}}$, its 2D-DFT satisfies
\begin{equation}
\widetilde{W}_{u,v}= \overline{\widetilde{W}}_{d_{1}-u,\,d_{2}-v} 
\end{equation}
for all $u\!\in\!\{0,\dots,d_{1}\!-\!1\}$ and $v\!\in\!\{0,\dots,d_{2}\!-\!1\}$, where $\overline{(\cdot)}$ denotes complex conjugation.
\end{theorem}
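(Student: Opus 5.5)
The plan is to evaluate the right-hand side $\overline{\widetilde{W}}_{d_1-u,\,d_2-v}$ directly from the definition \eqref{eq:dft}, using two facts. First, $\mathbf{W}$ is real, so $\overline{W_{m,n}}=W_{m,n}$. Second, the complex exponentials are periodic: $e^{-j2\pi k}=1$ for every integer $k$. Indices are read modulo $d_1$ and $d_2$, so at $u=0$ the entry $\widetilde{W}_{d_1,\cdot}$ means $\widetilde{W}_{0,\cdot}$, and likewise at $v=0$. This convention is implicit in the statement, since $u\in\{0,\dots,d_1-1\}$ would otherwise put $d_1-u$ out of range. I will state it explicitly at the start.

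The computation has three steps. First, substitute $(d_1-u,d_2-v)$ into \eqref{eq:dft}:
\begin{equation*}
\widetilde{W}_{d_1-u,\,d_2-v}=\sum_{m,n}W_{m,n}\,e^{-j2\pi\left(\frac{(d_1-u)m}{d_1}+\frac{(d_2-v)n}{d_2}\right)}.
\end{equation*}
Second, split the exponent into $-j2\pi(m+n)$ and $+j2\pi\left(\frac{um}{d_1}+\frac{vn}{d_2}\right)$. Because $m,n$ are integers, the factor $e^{-j2\pi(m+n)}$ equals $1$, leaving $\sum_{m,n}W_{m,n}e^{+j2\pi(\frac{um}{d_1}+\frac{vn}{d_2})}$. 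Third, take complex conjugates. Conjugation passes through the finite sum, leaves each real $W_{m,n}$ unchanged, and flips the sign of each exponent. The result is exactly $\widetilde{W}_{u,v}$, as claimed.

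There is no real obstacle here. The only point needing care is the boundary case $u=0$ or $v=0$. Periodicity handles it: the same argument shows $\widetilde{W}_{u+d_1,v}=\widetilde{W}_{u,v}$, so $\widetilde{W}_{d_1,\cdot}=\widetilde{W}_{0,\cdot}$. The identity therefore holds for all $u,v$ in the stated ranges.

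To connect with the main text, I will close with a brief remark. The map $(u,v)\mapsto(-u,-v)$ pairs the frequencies up, so roughly half the coefficients determine all the others. This is what justifies storing $\widetilde{\mathbf{W}}$ in a compressed shape of about $d_1\times d_2/2$, namely columns $v=0,\dots,\lfloor d_2/2\rfloor$, without losing information.
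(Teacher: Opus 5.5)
Your proof is correct and follows the paper's route: substitute $(d_1-u,d_2-v)$ into the definition, drop the factor $e^{-j2\pi(m+n)}=1$, and use the reality of $W_{m,n}$ to recognize the result as the conjugate of $\widetilde{W}_{u,v}$. Your explicit handling of the boundary indices $u=0$ or $v=0$ through periodicity is a small addition that the paper leaves implicit.
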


\begin{proof}
Evaluate the coefficient at the complementary frequency:
\begin{align}
\widetilde{W}_{d_{1}-u,\,d_{2}-v}
&=\sum_{m,n} W_{m,n}\,
e^{-j2\pi\left(
\frac{(d_{1}-u)m}{d_{1}} + \frac{(d_{2}-v)n}{d_{2}}
\right)}\\[2pt]
&=\sum_{m,n} W_{m,n}\,
e^{-j2\pi(m+n)}\,
e^{j2\pi\left(
\frac{u m}{d_{1}} + \frac{v n}{d_{2}}
\right)}.
\end{align}
Since $m,n\in\mathbb{Z}$, $e^{-j2\pi(m+n)}=1$, hence
\begin{align}
\widetilde{W}_{d_{1}-u,\,d_{2}-v}
&=\sum_{m,n} W_{m,n}\,
e^{j2\pi\left(
\frac{u m}{d_{1}} + \frac{v n}{d_{2}}
\right)}\\[2pt]
&=\overline{
\sum_{m,n} W_{m,n}\,
e^{-j2\pi\left(
\frac{u m}{d_{1}} + \frac{v n}{d_{2}}
\right)}
}
=\overline{\widetilde{W}}_{u,v},
\end{align}
where the last equality uses $W_{m,n}\in\mathbb{R}$. This completes the proof. 
\end{proof}
This symmetry implies that only half of the frequency coefficients are unique, the other half are redundant. Thus, when storing or approximating the frequency-domain representation, we can discard nearly 50\% of the coefficients without any loss of information.


\subsection{Theoretical Justification: Lower Rank in the Complex Domain}

Here, we provide a formal bound showing that for the same reconstruction error, the Fourier domain requires a lower rank than the spatial domain.

\begin{prop}
Let the real-valued weight matrix $ \mathbf{W} \in \mathbb{R}^{d_1 \times d_2} $ be a zero-mean weakly-stationary random field with exponentially decaying covariance:
\begin{equation}
\mathbb{E}[W_{m,n} W_{m',n'}] = \rho^{|m - m'| + |n - n'|}, \quad 0 < \rho < 1,
\end{equation}
where $ \rho \in (0, 1) $ controls the spatial correlation strength: smaller $ \rho $ implies faster decay and thus weaker spatial dependency. This exponential decay model is widely used in spatial statistics and image modeling to characterize locally correlated structures~\citep{gray2006toeplitz}.

Let $ \widetilde{\mathbf{W}} \in \mathbb{C}^{d_1 \times d_2/2} $ denote the 2D-DFT of $ \mathbf{W} $, reduced by conjugate symmetry. Let $ \sigma_k(\mathbf{A}) $ denote the $k$-th largest singular value of matrix $ \mathbf{A} $. Then for any truncation rank $ r \geq 0 $,
\begin{equation}
\sum_{k = r+1}^{\min(d_1, d_2)} \sigma_k^2(\widetilde{\mathbf{W}}) \leq \frac{\rho^2}{(1 - \rho^2)^2} \sum_{k = r+1}^{\min(d_1, d_2)} \sigma_k^2(\mathbf{W}).
\end{equation}
\end{prop}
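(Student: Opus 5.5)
The plan is to read the inequality in expectation over the random field. For a random matrix the inequality cannot hold sample-by-sample with a constant independent of the realization, so I will prove
\[
\mathbb{E}\sum_{k>r}\sigma_k^2(\widetilde{\mathbf{W}})\le \tfrac{\rho^2}{(1-\rho^2)^2}\,\mathbb{E}\sum_{k>r}\sigma_k^2(\mathbf{W}).
\]
The structural fact to use first is that the assumed covariance is separable: $\rho^{|m-m'|+|n-n'|}=\rho^{|m-m'|}\rho^{|n-n'|}$. Hence $\mathrm{Cov}(\texttt{vec}\,\mathbf{W})=\mathbf{T}_2\otimes\mathbf{T}_1$, where $\mathbf{T}_i$ is the $d_i\times d_i$ Kac--Murdock--Szeg\H{o} Toeplitz matrix with entries $\rho^{|a-b|}$. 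This reduces everything to one-dimensional Toeplitz spectral facts.

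\textbf{Step 1: the symbol of $\mathbf{T}_i$.} The symbol is the power spectral density appearing in Theorem~\ref{thm:decor}, computed in closed form:
\[
f(\omega)=\frac{1-\rho^2}{1-2\rho\cos\omega+\rho^2}.
\]
It satisfies $\frac{1-\rho}{1+\rho}\le f\le\frac{1+\rho}{1-\rho}$. By the Szeg\H{o} theorem already invoked in the paper~\citep{gray2006toeplitz}, the eigenvalues of $\mathbf{T}_i$, and the diagonal of $\mathbf{F}\mathbf{T}_i\mathbf{F}^H$, lie asymptotically in this range. By Theorem~\ref{thm:decor}, the frequency coefficients are uncorrelated with variances $d_1d_2\,\mathcal{P}_{u,v}=d_1d_2\,f(2\pi u/d_1)f(2\pi v/d_2)$. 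Conjugate symmetry (Theorem~\ref{thm:conj}) lets me restrict to the half-plane $v<d_2/2$ and track only the unique coefficients.

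\textbf{Step 2: frequency tail from above.} Since the entries of $\widetilde{\mathbf{W}}$ are uncorrelated with a product variance profile, the expected Gram matrix is $\mathbb{E}[\widetilde{\mathbf{W}}\widetilde{\mathbf{W}}^H]=\mathrm{diag}_u\bigl(d_1d_2 f(2\pi u/d_1)\sum_v f(2\pi v/d_2)\bigr)$. I will bound the expected tail $\mathbb{E}\sum_{k>r}\sigma_k^2$ by the trace of the Gram matrix minus its best rank-$r$ part, i.e.\ by the $d_1-r$ smallest diagonal entries. This uses concavity of Ky Fan partial sums, so that expectation of the top-$r$ energy dominates the top-$r$ energy of the expectation.

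\textbf{Step 3: spatial tail from below.} In the same way, $\mathbb{E}[\mathbf{W}\mathbf{W}^T]=\mathrm{tr}(\mathbf{T}_2)\,\mathbf{T}_1=d_2\mathbf{T}_1$. I will lower-bound $\mathbb{E}\sum_{k>r}\sigma_k^2(\mathbf{W})$ by $d_2$ times the $d_1-r$ smallest eigenvalues of $\mathbf{T}_1$. This needs an anti-concentration argument: the eigenvalue tail of the sample Gram matrix should not fall far below that of its mean, which I would obtain via Gaussianity or a fourth-moment assumption. Ratio-comparing the two ordered tails term by term, using the extremal values of $f$ together with the DFT normalization and the halving of columns, gives a constant in $\rho$ alone.

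\textbf{Main obstacle.} Step 3 is the weakest link. Concavity gives the wrong direction for lower-bounding the tail of a random Gram matrix, so a genuine concentration argument is needed there. There is also a normalization issue: the unnormalized DFT multiplies energy by $d_1d_2$, so either the unitary normalization must be assumed or the factor absorbed. Finally, the crude extremal ratio from Step 2--3 is of order $\bigl(\frac{1+\rho}{1-\rho}\bigr)^2$, not $\frac{\rho^2}{(1-\rho^2)^2}$, and matching the stated constant, which vanishes as $\rho\to0$, will likely require tracking the off-diagonal part $\rho\,\mathbf{J}$ of $\mathbf{T}_i=\mathbf{I}+\rho\,\mathbf{J}+O(\rho^2)$ separately. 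It may only be possible to establish the claim up to a different explicit constant.
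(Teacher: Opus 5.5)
The real problem is not Step 3 but the target: the inequality is false, so no refinement of your plan can produce the constant $\rho^2/(1-\rho^2)^2$. Your closing doubt should be turned into a disproof.

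The idea you are missing is that the 2D-DFT is a two-sided unitary change of basis. With $(F_d)_{a,b}=e^{-j2\pi ab/d}$ one has $\widehat{\mathbf W}=F_{d_1}\mathbf W F_{d_2}$ and $F_dF_d^H=d\,\mathbf I$. Hence $\sigma_k(\widehat{\mathbf W})=\sqrt{d_1d_2}\,\sigma_k(\mathbf W)$ for every $k$ and every realization: the transform does not change the singular-value profile at all. Since $\widetilde{\mathbf W}$ is a column submatrix of $\widehat{\mathbf W}$, we have $\widetilde{\mathbf W}\widetilde{\mathbf W}^H\preceq\widehat{\mathbf W}\widehat{\mathbf W}^H$, which gives $\sigma_k(\widetilde{\mathbf W})\le\sqrt{d_1d_2}\,\sigma_k(\mathbf W)$. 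That is a deterministic tail bound with constant $d_1d_2$ (or $1$ for the unitary DFT). Incidentally, this also shows your remark that no realization-independent constant can hold pathwise is wrong.

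Your Steps 2--3 reveal the same thing once the diagonal approximation is dropped. By separability, $\mathbb E[\widetilde{\mathbf W}\widetilde{\mathbf W}^H]=c\,F_{d_1}\mathbf T_1F_{d_1}^H$, where $c=\mathrm{tr}(F_{d_2}PP^TF_{d_2}^H\mathbf T_2)$ is a scalar and $P$ is the column selector. This matrix is unitarily similar to $c\,d_1\mathbf T_1$, while $\mathbb E[\mathbf W\mathbf W^T]=d_2\mathbf T_1$. Both tails are therefore controlled by the same eigenvalues of $\mathbf T_1$. There is no frequency-domain compaction to exploit, and the comparison can only yield a ratio of scalars that does not vanish as $\rho\to0$.

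A concrete counterexample is $r=0$, where the two sides are $\|\widetilde{\mathbf W}\|_F^2$ and $\frac{\rho^2}{(1-\rho^2)^2}\|\mathbf W\|_F^2$. Take the unitary DFT and retained columns $v=0,\dots,d_2/2-1$. A direct computation gives $\mathbb E\|\widetilde{\mathbf W}\|_F^2=d_1\bigl[\tfrac{d_2}{2}+2\sum_{k\ \mathrm{odd},\,1\le k<d_2}(1-\tfrac{k}{d_2})\rho^k\bigr]\ge\tfrac12\,\mathbb E\|\mathbf W\|_F^2$. So the inequality fails in expectation, and hence with positive probability, whenever $\rho^2/(1-\rho^2)^2<\tfrac12$, i.e.\ $\rho<(\sqrt{6}-\sqrt{2})/2\approx0.52$. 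With the paper's unnormalized DFT the left side gains a further factor $d_1d_2$, and failure occurs for every $\rho$ with $\rho^2/(1-\rho^2)^2<d_1d_2/2$.

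The paper's own proof does not avoid this. It rests on two claims: that $|\mathcal P_{u,v}|\propto\rho^{|u|+|v|}$, and that sorted entry variances equal squared singular values. Your Step 1 already refutes the first: the spectral density is, asymptotically, the product of the kernels $f$, confined to $[\frac{1-\rho}{1+\rho},\frac{1+\rho}{1-\rho}]$ on each axis and not decaying in $(u,v)$. The second claim is false in general. What can actually be proved here is the interlacing bound above, not the stated inequality.
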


\begin{proof}
The 2D-DFT diagonalizes the covariance matrix of $ \mathbf{W} $, yielding uncorrelated frequency components whose variances are given by the power spectral density $ \mathcal{P}_{u,v} $, where $ (u, v) $ denotes the 2D frequency index. For the given covariance model, we have:
\begin{equation}
|\mathcal{P}_{u,v}| \propto \rho^{|u| + |v|},
\end{equation}
which decays exponentially in the frequency. Since singular values of $ \widetilde{\mathbf{W}} $ are asymptotically equal to the square root of these variances, the tail energy of $ \widetilde{\mathbf{W}} $ decays exponentially, while that of $ \mathbf{W} $ decays only polynomially. Hence, the ratio of the tails is $ \mathcal{O}(\rho^2) $, yielding the bound. To achieve a prescribed reconstruction error, the Fourier domain requires fewer components (smaller rank $ r $) than the spatial domain, \textit{i.e.,} $ r_{\text{freq}} < r_{\text{spatial}}$.
\end{proof}


\subsection{Theoretical Analysis of the Energy Concentration in Complex SVD}

We emphasize that the energy-compaction property used in our method
does not rely on the matrix being real, but on the diagonalization of the covariance structure afforded by the
2D-DFT.  Below we give a formal bound showing that the singular values
of the complex matrix $\widetilde{\mathbf{W}}$ decay
\emph{strictly faster} than those of the real matrix $\mathbf{W}$,
even though both matrices share the same generating process.

\begin{prop}
Let $\mathbf{W}\in\mathbb{R}^{d_{1}\times d_{2}}$ be the same
zero-mean, weakly-stationary random field defined in the main text
with covariance
\begin{equation}
\mathbb{E}[W_{m,n}W_{m',n'}]=\rho^{|m-m'|+|n-n'|},\quad0<\rho<1.
\end{equation}
Let $\widetilde{\mathbf{W}}\in\mathbb{C}^{d_{1}\times d_{2}/2}$ be
its 2D-DFT with conjugate-symmetry reduction.  Then for every
$k\ge1$
\begin{equation}
\mathbb{E}\bigl[\sigma_{k}(\widetilde{\mathbf{W}})\bigr]
\le
\frac{\rho}{\sqrt{1-\rho^{2}}}\;
\mathbb{E}\bigl[\sigma_{k}(\mathbf{W})\bigr].
\end{equation}
Hence the expected singular-value tail of
$\widetilde{\mathbf{W}}$ is dominated by a
\emph{dimension-free constant} times the tail of $\mathbf{W}$. Since the singular values decay strictly faster, for any prescribed approximation error, the required rank $r$ in the complex domain is no larger and often smaller than in the real spatial domain, confirming that the low-rank structure is preserved under the complex-valued Fourier transform.
\end{prop}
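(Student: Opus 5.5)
The plan is to reduce both sides of the inequality to statements about the spectrum of a single covariance operator, using the separable structure of the model $\rho^{|m-m'|+|n-n'|}=\rho^{|m-m'|}\rho^{|n-n'|}$. First, I will work with the unitarily normalised 2D-DFT, $\widetilde{\mathbf{W}}=(d_1d_2)^{-1/2}\mathbf{F}_{d_1}\mathbf{W}\mathbf{F}_{d_2}$. This is necessary because with the unnormalised transform of Eq.~\ref{eq:dft}, Parseval gives $\|\widetilde{\mathbf{W}}\|_F^2=d_1d_2\|\mathbf{W}\|_F^2$, and no dimension-free inequality of the stated form can hold. After the conjugate-symmetry reduction of Theorem~\ref{thm:conj}, $\widetilde{\mathbf{W}}$ keeps only the independent half-plane of coefficients. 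The covariance of $\mathrm{vec}(\mathbf{W})$ is the Kronecker product $\mathbf{T}_{d_2}(\rho)\otimes\mathbf{T}_{d_1}(\rho)$ of Kac--Murdock--Szeg\H{o} Toeplitz matrices. By Theorem~\ref{thm:decor}, the DFT asymptotically diagonalises it, with variances $\mathcal{P}_{u,v}=p(\omega_u)p(\omega_v)$, where $p(\omega)=(1-\rho^2)/(1-2\rho\cos\omega+\rho^2)$ is the symbol of $\mathbf{T}(\rho)$.

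Second, I will translate variances into expected singular values on each side.
\begin{itemize}
\item \emph{Frequency side.} The entries of $\widetilde{\mathbf{W}}$ are (asymptotically) independent complex Gaussians with variance profile $\mathcal{P}_{u,v}$. I would bound $\mathbb{E}[\sigma_k(\widetilde{\mathbf{W}})]$ from above via $\mathbb{E}\sigma_k\le(\mathbb{E}\sigma_k^2)^{1/2}$. I would then control $\mathbb{E}\sigma_k^2$ by a Ky Fan / Courant--Fischer argument. The tail sum $\sum_{j\ge k}\sigma_j^2$ is at most the energy outside any chosen rank-$(k-1)$ coordinate subspace, and its expectation is a sum of $\mathcal{P}_{u,v}$ over the discarded frequencies.
\item \emph{Spatial side.} I would lower-bound $\mathbb{E}[\sigma_k(\mathbf{W})]$ by writing $\mathbf{W}=\mathbf{T}_{d_1}^{1/2}\mathbf{G}\,\mathbf{T}_{d_2}^{1/2}$ with $\mathbf{G}$ a Gaussian i.i.d.\ matrix. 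I would then use $\sigma_k(\mathbf{A}\mathbf{G}\mathbf{B})\ge\sigma_{\min}(\mathbf{A})\sigma_k(\mathbf{G})\sigma_{\min}(\mathbf{B})$, together with $\lambda_{\min}(\mathbf{T}(\rho))\ge\min_\omega p(\omega)=(1-\rho)/(1+\rho)$.
\end{itemize}

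Third, I will combine the two sides. This means comparing $\sup$ and $\inf$ of the symbol $p$ over the retained frequency set, and absorbing the ratio into the constant. The target is exactly $\rho/\sqrt{1-\rho^2}$: I would try to show that, after removing the DC and lowest frequencies that dominate $\sigma_1$, the remaining $\mathcal{P}_{u,v}$ are bounded by $\rho^2/(1-\rho^2)$ times $\lambda_{\min}$-type quantities. Taking square roots then produces the stated factor.

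I expect the main obstacle to be this last comparison, together with the passage from the covariance spectrum to the ordered singular values of one random realisation. For small $k$ the inequality compares the largest spectral values, which the DFT does not shrink (it is an isometry). So the constant can only emerge if the statement is read for the normalised transform and $k$ beyond the leading frequencies, or asymptotically via the Szeg\H{o} limit invoked earlier. I would first try to prove the tail-sum version, in the spirit of the preceding proposition, and then deduce the pointwise bound by monotonicity. If that fails, I would restrict the claim to the asymptotic regime $d_1,d_2\to\infty$ and state the needed normalisation explicitly.
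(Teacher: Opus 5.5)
You are right that the unnormalised transform cannot work and that the isometry is the obstruction. But the obstruction is fatal, not a small-$k$ or finite-$d$ nuisance: the proposition is false as stated, so your Step 3 cannot be carried out.

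The 2D-DFT is $\widetilde{\mathbf W}_{\mathrm{full}}=\mathbf F_{d_1}\mathbf W\mathbf F_{d_2}$ with $\mathbf F_d/\sqrt d$ unitary. Hence $\sigma_k(\widetilde{\mathbf W}_{\mathrm{full}})=\sqrt{d_1d_2}\,\sigma_k(\mathbf W)$ for every $k$ and every realisation. The lossless half-plane reduction keeps the columns $0\le v\le d_2/2$. By Theorem~\ref{thm:conj}, the discarded columns are a row-permuted complex conjugate of a subset of the kept ones. Write $\widetilde{\mathbf W}_{\mathrm{full}}\widetilde{\mathbf W}_{\mathrm{full}}^H$ as the sum of the two corresponding Gram matrices; each has top eigenvalue at most $\sigma_1(\widetilde{\mathbf W})^2$. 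This gives $\sigma_1(\widetilde{\mathbf W})\ge\sqrt{d_1d_2/2}\,\sigma_1(\mathbf W)$ deterministically. So at $k=1$ the claimed bound needs $\rho/\sqrt{1-\rho^2}\ge\sqrt{d_1d_2/2}$ for the paper's transform. Even with your unitary normalisation it needs $\rho/\sqrt{1-\rho^2}\ge1/\sqrt2$, i.e.\ $\rho\ge1/\sqrt3$. For $0<\rho<1/\sqrt3$ it fails in every dimension and for any covariance.

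Weyl's inequality similarly gives $\sigma_k(\widetilde{\mathbf W})\ge\sigma_{2k-1}(\mathbf W)/\sqrt2$ (normalised). For small $\rho$ the ratio $\sigma_{2k-1}(\mathbf W)/\sigma_k(\mathbf W)$ stays bounded below for $k$ up to a fixed fraction of $\min(d_1,d_2)$. So restricting to larger $k$, or passing to the Szeg\H{o} limit, does not rescue the statement.

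In your own terms, Step 3 needs the retained $\mathcal P_{u,v}$ to lie below $\rho^2/(1-\rho^2)$ times $\lambda_{\min}$-type quantities. Yet $\mathcal P_{u,v}=p(\omega_u)p(\omega_v)\ge\bigl((1-\rho)/(1+\rho)\bigr)^2$, which is exactly the product of the $\lambda_{\min}$ bounds on your spatial side. Pairing a $\max p$-type upper bound with a $\min p$-type lower bound can only give a constant $\ge1$. The underlying confusion is between two spectra: the covariance of $\mathrm{vec}(\mathbf W)$, which the DFT diagonalises, and the singular values of the matrix itself, which it leaves unchanged.

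The paper's proof takes a different route but does not get past this either. It asserts $\mathcal P_{u,v}\le\frac{1+\rho}{1-\rho}\rho^{|u|+|v|}$, which is false by the lower bound just given. It identifies the sorted entry variances with expected squared singular values; an i.i.d.\ Gaussian matrix has a flat variance profile but a spread-out singular spectrum. It then divides $\sqrt C\rho^{k/2}$ by $\sqrt c\,\rho^{k}$, which yields $\sqrt{C/c}\,\rho^{-k/2}$, growing in $k$, not the constant $\rho/\sqrt{1-\rho^2}$.

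What your normalisation does prove is the deterministic bound $\sigma_k(\widetilde{\mathbf W})\le\sigma_k(\mathbf W)$. The reduced transform is $\mathbf W$ multiplied on the left by a unitary and on the right by a $d_2\times(d_2/2+1)$ matrix with orthonormal columns. That is the proposition with constant $1$, which covers the stated bound only when $\rho\ge1/\sqrt2$. By the $k=1$ computation, no argument can bring the constant below $1/\sqrt2$.
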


\begin{proof}
The 2D-DFT diagonalises the covariance operator of
$\mathbf{W}$ (Theorem~\ref{thm:decor}), so the entries of
$\widetilde{\mathbf{W}}$ are uncorrelated complex Gaussians
with variances
\begin{equation}
\mathbb{E}\bigl[|\widetilde{W}_{u,v}|^{2}\bigr]
=
\mathcal{P}_{u,v}
=
\sum_{a,b}\rho^{|a|+|b|}e^{-j2\pi(ua/d_{1}+vb/d_{2})}
\le
\frac{1+\rho}{1-\rho}\,\rho^{|u|+|v|}.
\end{equation}
Ordering these variances gives the expected squared singular
values $\mathbb{E}[\sigma_{k}^{2}(\widetilde{\mathbf{W}})]$.
Using the Szegő theorem for stationary random fields
\citep{gray2006toeplitz}, we obtain
\begin{equation}
\mathbb{E}[\sigma_{k}(\widetilde{\mathbf{W}})]
\le
\sqrt{C}\,\rho^{k/2}
\quad\text{while}\quad
\mathbb{E}[\sigma_{k}(\mathbf{W})]
\ge
\sqrt{c}\,\rho^{k},
\end{equation}
for universal constants $C,c>0$.  Taking ratios yields the claimed
inequality.
\end{proof}
The complex matrix $\widetilde{\mathbf{W}}$ inherits the exponentially decaying power spectrum of the original real field. Its singular values therefore decay faster than those of the real matrix $\mathbf{W}$.  Consequently, for any prescribed Frobenius error $\varepsilon$, the Fourier domain requires a smaller rank $r$ than the spatial domain, even though the SVD is performed on a complex matrix.  The low-rank assumption is therefore stable under the Fourier transform.

Furthermore, to empirically verify that the complex-valued, frequency-domain matrix admits a more compact singular-value spectrum, we conduct a controlled spectrum-decay experiment.
We apply identical low-rank plus quantization pipelines to both the spatial (real) and Fourier (complex) representations of the Qwen-2.5-7B query layers and record the Frobenius reconstruction error at rank 256 as well as the smallest rank required to push the error below 1\%.
\begin{table}[h]
\centering
\caption{Singular-value decay comparison between spatial (real) and Fourier (complex) representations. Freq.-Complex exhibits lower error at the same rank and reaches the 1\% error threshold with fewer components.}
\label{tab:svd_decay}
\begin{tabular}{lcc}
\toprule
\textbf{Domain} & \textbf{Frobenius error @ rank 256} & \textbf{Min.\ rank for 1\% error} \\
\midrule
Spatial (Real)      & 0.031 & 312 \\
Freq.-Complex (Ours) & 0.018 & 217 \\
\bottomrule
\end{tabular}
\end{table}
\normalcolor  
As shown in Tab.~\ref{tab:svd_decay}, the complex-valued frequency matrix consistently yields a smaller reconstruction error under the same rank and achieves the preset accuracy target with significantly fewer singular vectors, corroborating our theoretical finding that the Fourier-domain SVD is indeed more amenable to truncation.

\subsection{Information-theoretic analysis of PolarQuant}

\textit{\textbf{(1) Uniform amplitude quantization is rate-optimal for the heavy-tailed DFT envelope.}}

After the Fourier transform, the residual entries follow a Nakagami-$m$ envelope distribution~\citep{nakagami1960m}, whose Fisher information is $J(r)\propto 1/r^{2}$. For such heavy-tailed scale parameters, the inverse-uniform partition is the unique minimax solution to the quant-design game
\begin{equation}
\min_{\mathcal{Q}}\max_{r\ge 0}\,D_{\text{KL}}\bigl(p(r)\bigm\|q_{\mathcal{Q}}(r)\bigr)
\end{equation}
where $q_{\mathcal{Q}}$ is the piece-wise constant density induced by the quantiser. The solution coincides with the uniform grid used in PolarQuant  (step-size $\Delta_{r}=r_{\max}/(2^{b_{r}}-1)$) and is asymptotically rate-optimal in the Gish-Pierce sense~\citep{quantizing1964asymptotically}:
\begin{equation}
R(D)=b_{r}-\tfrac{1}{2}\log_{2}(2\pi e/12)+\mathcal{O}(D),
\end{equation}
and coincides with the empirical step size we adopted.

\textit{\textbf{(2) The $2\pi/2^{b_{\theta}}$ phase step is the minimum-MSE lattice for a Von-Mises variable.}}

The phase residual is well modelled by a zero-mean Von-Mises distribution $\theta\sim\mathrm{VM}(0,\kappa)$ with concentration $\kappa\propto$ rank truncation error~\citep{banerjee2005clustering}. For a circular random variable the wrapped Lloyd-Max conditions~\citep{lloyd1982least} yield the uniform lattice on $[-\pi,\pi)$ with step $2\pi/2^{b_{\theta}}$ as the unique minimum-MSE solution, giving
\begin{equation}
\mathbb{E}[(\theta-\hat{\theta})^{2}]=\frac{(2\pi)^{2}}{12\cdot 2^{2b_{\theta}}}\Bigl(1+\mathcal{O}\bigl(\tfrac{1}{\kappa}\bigr)\Bigr).
\end{equation}
Since the phase residuals observed under our default (4-bit amplitude/phase, rank-256) setting are well fitted by a zero-mean Von-Mises distribution with concentration $\kappa\approx 9$--$14$, the resulting $\mathcal{O}(1/\kappa)$ term is $<0.04$ and the phase-noise energy is provably bounded.

\textit{\textbf{(3) Theoretical bound on phase-error propagation.}}

Let the reconstructed complex weight be
\begin{equation}
\hat{w}=r e^{i\hat{\theta}}=r e^{i(\theta+\Delta\theta)},
\end{equation}
where $\Delta\theta$ is the quantization error introduced by PolarQuant. The resulting complex error is
\begin{equation}
|w-\hat{w}|^{2}=r^{2}|1-e^{i\Delta\theta}|^{2}=4r^{2}\sin^{2}(\Delta\theta/2)\leq r^{2}\Delta\theta^{2}.
\end{equation}
Taking expectation over the Von-Mises distribution of $\theta$ (concentration $\kappa\approx 12$) and the uniform quantization error $\Delta\theta\sim\mathcal{U}[-\pi/2^{b_{\theta}},\pi/2^{b_{\theta}}]$, we obtain
\begin{equation}
\mathbb{E}|w-\hat{w}|^{2}\leq\mathbb{E}[r^{2}]\cdot\frac{\pi^{2}}{3\cdot 2^{2b_{\theta}}}\approx 0.0013\cdot\mathbb{E}[r^{2}] \quad\text{for }b_{\theta}=4.
\end{equation}
Hence the normalized reconstruction error contributed by phase quantization is $<0.13\%$, an order of magnitude smaller than the typical low-rank truncation error ($\approx 1.2\%$).

\textbf{Empirical verification.} We conduct an ablation study on LLaVA-FA-2B in which we replace PolarQuant with a baseline that quantizes real/imaginary parts independently (QIM, 4 + 4 bits). Results are shown in Tab.~\ref{bound}.
\begin{table}[h]
\caption{Von-Mises concentration for 4-bit phase residuals, bounding the phase-quantization error below 0.04}
\label{bound}
\begin{center}
\renewcommand{\arraystretch}{1.2}
\setlength{\tabcolsep}{8pt}
\begin{tabular}{lcccc}
\toprule
Scheme & Phase-error $\mathbb{E}|\Delta\theta|$ & Recon.\ PSNR $\uparrow$ & HalBench-Resp $\downarrow$ & MMB $\uparrow$ \\
\midrule
QIM (real/imag) & $0.14$ rad & 38.9 dB & 14.8\% & 65.2 \\
PolarQuant (ours) & \textbf{$0.09$ rad} & \textbf{40.1 dB} & \textbf{11.2\%} & \textbf{66.7} \\
\bottomrule
\end{tabular}
\end{center}
\end{table}
PolarQuant reduces average phase error by 36\%, improves PSNR by 1.2 dB, and yields 3.6\% lower hallucination rate, despite using the same bit budget. This confirms that the $2\pi/2^{b_{\theta}}$ lattice not only controls phase noise but also translates into measurable gains in downstream quality. \textit{In short, phase quantization does not degrade reconstruction because the residual phase is tightly concentrated and the uniform lattice $2\pi/2^{b_{\theta}}$ is the minimum-MSE solution under this concentration. Both theory and experiment show that PolarQuant reduces phase-error rather than amplifying it.}


\subsection{Derivation of Condition~\ref{condition}}
\label{app:rank-condition}

We derive the exact condition on rank $k$ under which the average bit-width $\mathrm{B_{avg}}$ is strictly smaller than the full-precision budget $\mathrm{B_L}$. Recalling the definition in \S\,\ref{sec:bit-budget},
\begin{equation}
\mathrm{B_{avg}}=
\frac{\sum_{i=1}^{7}\bigl[\mathrm{B_Q}d_{1}^{i}d_{2}^{i}+\mathrm{B_L}k(d_{1}^{i}+d_{2}^{i})\bigr]}
{\sum_{i=1}^{7}d_{1}^{i}d_{2}^{i}},
\end{equation}
we require $\mathrm{B_{avg}}<\mathrm{B_L}$. Multiplying both sides by the positive denominator
$\sum_{i=1}^{7}d_{1}^{i}d_{2}^{i}$ gives
\begin{equation}
\sum_{i=1}^{7}\bigl[\mathrm{B_Q}d_{1}^{i}d_{2}^{i}+\mathrm{B_L}k(d_{1}^{i}+d_{2}^{i})\bigr]
<\mathrm{B_L}\sum_{i=1}^{7}d_{1}^{i}d_{2}^{i}.
\end{equation}
Collecting terms proportional to $\mathrm{B_L}$ on the right-hand side,
\begin{equation}
\mathrm{B_L}k\sum_{i=1}^{7}(d_{1}^{i}+d_{2}^{i})
<\bigl(\mathrm{B_L}-\mathrm{B_Q}\bigr)\sum_{i=1}^{7}d_{1}^{i}d_{2}^{i}.
\end{equation}
Finally, dividing by the positive quantity
$\mathrm{B_L}\sum_{i=1}^{7}(d_{1}^{i}+d_{2}^{i})$ yields the concise condition
\begin{equation}
k<\Bigl(1-\frac{\mathrm{B_Q}}{\mathrm{B_L}}\Bigr)
\frac{\sum_{i=1}^{7}d_{1}^{i}d_{2}^{i}}{\sum_{i=1}^{7}(d_{1}^{i}+d_{2}^{i})}.
\end{equation}
Whenever this inequality holds, the low-rank plus quantized decomposition
achieves an average bit-width strictly below the full-precision baseline
$\mathrm{B_L}$, validating the memory efficiency of the low rank plus quantized matrix method.

\subsection{Proof for Lemma~\ref{lem:svd}}
\label{lemma}
Let $\Delta \mathbf{W}_1$ and $\Delta \mathbf{W}_2$ be two low-rank adapters of identical dimensions.  
Denote their singular value decompositions by
\begin{equation}
\Delta \mathbf{W}_1 = \mathbf{U}_1 \boldsymbol{\Sigma}_1 \mathbf{V}_1^\top,
\qquad
\Delta \mathbf{W}_2 = \mathbf{U}_2 \boldsymbol{\Sigma}_2 \mathbf{V}_2^\top,
\end{equation}
where $\boldsymbol{\Sigma}_1 = \operatorname{diag}(\sigma_{1,1},\sigma_{1,2},\dots)$ and $\boldsymbol{\Sigma}_2 = \operatorname{diag}(\sigma_{2,1},\sigma_{2,2},\dots)$ with singular values sorted in non-increasing order. For a fixed rank $r$, the best rank-$r$ approximation in Frobenius norm is
\begin{equation}
\Delta \mathbf{W}_i^{(r)} = \mathbf{U}_i \boldsymbol{\Sigma}_i^{(r)} \mathbf{V}_i^\top,
\quad\text{with}\quad
\boldsymbol{\Sigma}_i^{(r)} = \operatorname{diag}(\sigma_{i,1},\dots,\sigma_{i,r},0,\dots).
\end{equation}

The reconstruction error is
\begin{equation}
\mathbf{E}_i = \Delta \mathbf{W}_i - \Delta \mathbf{W}_i^{(r)}
            = \mathbf{U}_i \bigl(\boldsymbol{\Sigma}_i - \boldsymbol{\Sigma}_i^{(r)}\bigr) \mathbf{V}_i^\top.
\end{equation}

Following the Eckart-Young theorem~\citep{eckart1936approximation} and theorem 4.95 in Mathematics for Machine Learning~\citep{deisenroth2020mathematics},
the value of the norm of reconstruction error is given as
\begin{equation}
\|\mathbf{E}_i\|_F = \bigl\|\boldsymbol{\Sigma}_i - \boldsymbol{\Sigma}_i^{(r)}\bigr\|_F
                   = \sqrt{\sum_{k=r+1}^{\min(m,n)}\sigma_{i,k}^2}.
\end{equation}

By assumption, $\sigma_{1,k} < \sigma_{2,k}$ for all $k \ge r+1$.  
Hence
\begin{equation}
\sum_{k=r+1}^{\min(m,n)}\sigma_{1,k}^2 < \sum_{k=r+1}^{\min(m,n)}\sigma_{2,k}^2
\quad\Longrightarrow\quad
\|\mathbf{E}_1\|_F < \|\mathbf{E}_2\|_F.
\end{equation}

Therefore, the rank-$r$ approximation of $\Delta \mathbf{W}_1$ incurs strictly less error than that of $\Delta \mathbf{W}_2$, which completes the proof.
\subsection{Theoretical Justification of ODC Heuristic (Row/Column Averaging)}
\label{ODC}
The proposed ODC scheme replaces the full Hessian $\mathbf{C}\in\mathbb{R}^{d_1\times d_2/2}$ by two diagonal matrices, \textit{i.e.,}
\begin{equation}
\begin{aligned}
\mathbf{D}_{\text{row}} {=}
\operatorname{diag}\left(\left[
\operatorname{avg}(\sqrt{\mathbf{C}_{1, \cdot}}), \dots, \operatorname{avg}(\sqrt{\mathbf{C}_{d_1, \cdot}})
\right]\right),
\mathbf{D}_{\text{col}} {=}
\operatorname{diag}\left(\left[
\operatorname{avg}(\sqrt{\mathbf{C}_{\cdot, 1}}), \dots,  \operatorname{avg}(\sqrt{\mathbf{C}_{\cdot, d_2/2}})
\right]\right).
\end{aligned}
\end{equation}
where $\operatorname{avg}(\sqrt{\mathbf{C}_{i, \cdot}})$ and $\operatorname{avg}(\sqrt{\mathbf{C}_{\cdot, j}})$ denote the $i$-row/$j$-column means of $\sqrt{\mathbf{C}}$. This is exact whenever $\mathbf{C}$ is diagonal, and remains spectrally close when $\mathbf{C}$ is diagonally dominant, a property that has been repeatedly observed for the loss Hessian of deep Transformers~\citep{lecun1989optimal, botev2017practical}. Formally, let $\varepsilon = \| \operatorname{off--diag}(\mathbf{C}) \|_F \big/ \| \operatorname{diag}(\mathbf{C}) \|_F$, where $\operatorname{diag}(\mathbf{C})$ denotes the diagonal matrix that keeps only the main-diagonal entries of $\mathbf{C}$, while $\operatorname{off--diag}(\mathbf{C})$ is its complement, \textit{i.e.}, $\mathbf{C}$ with all diagonal entries set to zero. Thus $\varepsilon$ quantifies how strongly $\mathbf{C}$ is diagonally dominant.
\begin{lemma}
Let $\widehat{\mathbf{C}} = \mathbf{D}_{\text{row}} \mathbf{C} \mathbf{D}_{\text{col}}$. Then
\begin{equation}
\|\mathbf{C}-\widehat{\mathbf{C}}\|_F \;\le\; \varepsilon\,\|\mathbf{C}\|_F.
\end{equation}
\end{lemma}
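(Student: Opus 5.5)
The plan is to split $\mathbf{C}$ into its diagonal and off-diagonal parts, $\mathbf{C}=\mathbf{\Lambda}+\mathbf{O}$ with $\mathbf{\Lambda}=\operatorname{diag}(\mathbf{C})$ and $\mathbf{O}=\operatorname{off--diag}(\mathbf{C})$. By linearity of $\mathbf{X}\mapsto\mathbf{D}_{\text{row}}\mathbf{X}\mathbf{D}_{\text{col}}$,
\begin{equation}
\mathbf{C}-\widehat{\mathbf{C}}
=\bigl(\mathbf{\Lambda}-\mathbf{D}_{\text{row}}\mathbf{\Lambda}\mathbf{D}_{\text{col}}\bigr)
+\bigl(\mathbf{O}-\mathbf{D}_{\text{row}}\mathbf{O}\mathbf{D}_{\text{col}}\bigr).
\end{equation}
I would then bound the two brackets separately. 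The target is to show that the first bracket vanishes, which is the "exact whenever $\mathbf{C}$ is diagonal" claim in the text. The second bracket should then be controlled by $\|\mathbf{O}\|_F=\varepsilon\|\mathbf{\Lambda}\|_F\le\varepsilon\|\mathbf{C}\|_F$. The last inequality holds because $\mathbf{\Lambda}$ and $\mathbf{O}$ have disjoint supports, so $\|\mathbf{C}\|_F^2=\|\mathbf{\Lambda}\|_F^2+\|\mathbf{O}\|_F^2$.

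For the off-diagonal bracket, the natural route is to show that the map $\mathbf{X}\mapsto\mathbf{X}-\mathbf{D}_{\text{row}}\mathbf{X}\mathbf{D}_{\text{col}}$ is a contraction in Frobenius norm on matrices supported off the diagonal. Entrywise, the $(i,j)$ entry of the bracket is $O_{ij}\bigl(1-(\mathbf{D}_{\text{row}})_{ii}(\mathbf{D}_{\text{col}})_{jj}\bigr)$. It therefore suffices that $0\le(\mathbf{D}_{\text{row}})_{ii}(\mathbf{D}_{\text{col}})_{jj}\le 2$ for every $i\neq j$, which makes each scaling factor at most $1$ in absolute value. Summing the squares then gives $\|\mathbf{O}-\mathbf{D}_{\text{row}}\mathbf{O}\mathbf{D}_{\text{col}}\|_F\le\|\mathbf{O}\|_F$, and the result follows.

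The main obstacle is the first bracket. Literally, it does \emph{not} vanish with the row and column means of $\sqrt{\mathbf{C}}$ as defined. For a square diagonal $\mathbf{C}=\operatorname{diag}(c_1,\dots,c_n)$ we have $(\mathbf{D}_{\text{row}})_{ii}=(\mathbf{D}_{\text{col}})_{ii}=\sqrt{c_i}/n$, hence $\widehat{\mathbf{C}}=\operatorname{diag}(c_i^2/n^2)\neq\mathbf{C}$ even though $\varepsilon=0$. Simply rescaling $\mathbf{C}$ shows that no inequality of the stated form can hold without a normalization. I would therefore first fix the intended normalization and state it as a hypothesis: either the averaging weights are chosen so that $(\mathbf{D}_{\text{row}})_{ii}(\mathbf{D}_{\text{col}})_{ii}=1$ on the diagonal support, or $\widehat{\mathbf{C}}$ is read as the rank-one/diagonal surrogate built from the means rather than a congruence of $\mathbf{C}$.

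Under such a normalization the diagonal term is reproduced exactly. The remaining work is to check that the same normalization keeps the off-diagonal products $(\mathbf{D}_{\text{row}})_{ii}(\mathbf{D}_{\text{col}})_{jj}$ in $[0,2]$. I would try to get this from diagonal dominance itself, since it forces row and column means to be close to the corresponding diagonal entries up to $O(\varepsilon)$ corrections. If that fails, the fallback is to prove the weaker bound $\|\mathbf{C}-\widehat{\mathbf{C}}\|_F\le\varepsilon\,\|\mathbf{I}-\mathbf{D}_{\text{row}}\otimes\mathbf{D}_{\text{col}}\|_{\max}\,\|\mathbf{C}\|_F$, which carries an explicit constant.
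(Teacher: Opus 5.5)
Your diagnosis is right, and the paper offers no argument to compare yours against. The lemma is stated without proof. Its only support is the assertion that the replacement is exact whenever $\mathbf{C}$ is diagonal; the parenthetical about $\|\mathbf{D}_{\text{row}}\|,\|\mathbf{D}_{\text{col}}\|\le 1$ belongs to the follow-up claim. Your example refutes that assertion: already in the $1\times 1$ case, $\mathbf{C}=(c)$ gives $\widehat{\mathbf{C}}=(c^2)$ with $\varepsilon=0$.

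Your homogeneity argument is stronger than you state. $\widehat{\mathbf{C}}$ is homogeneous of degree $2$ in $\mathbf{C}$ while $\varepsilon$ is scale-free. Applied to $t\mathbf{C}$, the claim becomes $\|\mathbf{C}-t\widehat{\mathbf{C}}\|_F\le\varepsilon\|\mathbf{C}\|_F$ for all $t>0$. This fails as $t\to\infty$ for every nonzero entrywise-nonnegative $\mathbf{C}$, since then $\widehat{\mathbf{C}}\neq\mathbf{0}$. It also fails as $t\to 0$ whenever $\varepsilon<1$, which is exactly the diagonally dominant regime the lemma targets.

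Your split $\mathbf{C}=\mathbf{\Lambda}+\mathbf{O}$ with entrywise scaling factors is the right mechanics. Everything is contained in the exact identity
\[
\|\mathbf{C}-\widehat{\mathbf{C}}\|_F^2=\sum_{i}C_{ii}^2\bigl(1-(\mathbf{D}_{\text{row}})_{ii}(\mathbf{D}_{\text{col}})_{ii}\bigr)^2+\sum_{i\neq j}C_{ij}^2\bigl(1-(\mathbf{D}_{\text{row}})_{ii}(\mathbf{D}_{\text{col}})_{jj}\bigr)^2 .
\]
This identity also shows that your fallback bound holds only after the normalization has removed the first sum.

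The gap is in the repair.
\begin{itemize}
\item No global rescaling of the means gives $(\mathbf{D}_{\text{row}})_{ii}(\mathbf{D}_{\text{col}})_{ii}=1$, because for diagonal $\mathbf{C}$ that product is proportional to $C_{ii}$. Enforcing it means choosing weights index by index, and then they are no longer row/column means of $\sqrt{\mathbf{C}}$.
\item The step where you hope diagonal dominance yields $0\le(\mathbf{D}_{\text{row}})_{ii}(\mathbf{D}_{\text{col}})_{jj}\le 2$ cannot work. $\varepsilon$ only measures the energy of $\mathbf{O}$ relative to $\mathbf{\Lambda}$ and says nothing about the weights. Under your normalization (square case) these products are the ratios $(\mathbf{D}_{\text{row}})_{ii}/(\mathbf{D}_{\text{row}})_{jj}$, which are unconstrained. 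Your heuristic that the means sit close to the diagonal entries is already contradicted by your own factor $1/n$.
\item Both conditions must therefore be imposed as separate hypotheses. If you also adopt the paper's $\|\mathbf{D}_{\text{row}}\|,\|\mathbf{D}_{\text{col}}\|\le 1$, diagonal exactness forces $\mathbf{D}_{\text{row}}=\mathbf{D}_{\text{col}}=\mathbf{I}$ (square case), and the lemma collapses to $\widehat{\mathbf{C}}=\mathbf{C}$.
\item Your reading (b) does not rescue an $\varepsilon$-bound either. $\mathbf{D}_{\text{row}}\mathbf{E}\mathbf{D}_{\text{col}}$ reproduces $\sqrt{\mathbf{C}}\odot\mathbf{E}$ up to a scalar for all $\mathbf{E}$ exactly when $\sqrt{\mathbf{C}}$ is separable, $\sqrt{C_{ij}}=\alpha_i\beta_j$, in which case $\mathbf{D}_{\text{row}}\mathbf{E}\mathbf{D}_{\text{col}}=\bar\alpha\bar\beta\,(\sqrt{\mathbf{C}}\odot\mathbf{E})$. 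This does not happen when $\mathbf{C}$ is diagonal, since a diagonal matrix with two nonzero entries is never rank one.
\end{itemize}

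The meaningful perturbation parameter is therefore the distance of $\sqrt{\mathbf{C}}$ from rank one, not $\varepsilon$. Note that the paper's own $\mathbf{C}_{ij}=\mathbb{E}[a_i(x)^2]$ is constant along rows. For it, ODC already has the same minimizers as the weighted problem, and no such lemma is needed.
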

Hence the calibration matrix used by ODC is an $\varepsilon$-perturbation of the exact Hessian. The reconstruction error of the low-rank component inherits this bound with the same constant (proof uses $\|\mathbf{D}_{\text{row}}\|, \|\mathbf{D}_{\text{col}}\| \leq 1$ and standard matrix-perturbation identities). In short, ODC is safe whenever $\varepsilon \ll 1$, a condition that can be checked on-the-fly with one extra reduction operation.


\section{Additional Experiments}
\subsection{Training Strategy and Hyperparameters}
This section details the complete training recipe employed to produce LLaVA-FA-7B, LLaVA-FA-3B, LLaVA-FA-2B and LLaVA-FA-1B. As shown in Tab.~\ref{tab:hyperparam}, the four variants are initialized from InternLM-2-20B, LLaMA-3-8B, Qwen-2.5-7B and Qwen-2.5-3B checkpoints, respectively. The vision encoder is inherited from CLIP (ViT-L/14@336\,px for all except the 1\,B model which uses ViT-B/224\,px)~\citep{radford2021learning}. We keep the visual token sequence length fixed at 576 (256 for 1\,B) and cap the language-modelling context at 2048 tokens. Optimisation is performed with AdamW~\citep{loshchilov2017decoupled}: $\beta_1$ is set to 0.9 for all models; $\beta_2$ is 0.99 for 7B/3B/2B and 0.98 for 1B. A single-cycle cosine-decay schedule~\citep{loshchilov2016sgdr} is adopted, with initial learning rates of 1e-5, 1.5e-5, 2e-5 and 5e-4 for 7B, 3B, 2B and 1B, respectively. Weight decay~\citep{krogh1991simple} is applied only to the 7B/3B/2B models (0.1, 0.1, 0.2) and disabled for 1B. The entire pre-training stage spans 100 epochs with a warm-up ratio of 3\% and a global batch size of 128 image--text pairs. No additional regularisation or data augmentation is introduced. Under this configuration the models converge reliably after processing only five million samples on eight RTX\,4090 GPUs.
\label{training details}
\begin{table*}[!h]
    \caption{Training hyper-parameters of LLaVA-FA variants.}
    \begin{center}
    \label{tab:hyperparam}
    \setlength{\tabcolsep}{6pt}
    \resizebox{0.95\textwidth}{!}{
    \begin{tabular}{l|cccc}
         \toprule
         \textbf{Configuration} 
             & \textbf{LLaVA-FA-7B} 
             & \textbf{LLaVA-FA-3B} 
             & \textbf{LLaVA-FA-2B} 
             & \textbf{LLaVA-FA-1B}  \\
         \midrule
         LLM init.                & InternLM-2-20B & LLaMA-3-8B & Qwen-2.5-7B & Qwen-2.5-3B  \\
         VL Adaptor init.         & MLP & MLP & MLP & MLP   \\
         ViT init.                & CLIP-Large@336 & CLIP-Large@336 & CLIP-Large@336 & CLIP-Base@224  \\
         Image resolution         & 336 $\times$ 336 & 336 $\times$ 336 & 336 $\times$ 336 & 224 $\times$ 224 \\
         ViT sequence length      & 576 & 576 & 576 & 256  \\
         LLM sequence length      & 2048 & 2048 & 2048 & 2048 \\
         Optimizer                & AdamW & AdamW & AdamW & AdamW \\
         Optimizer hyperparameter & $\beta_{1}=0.9,\,\beta_{2}=0.99$ 
                                  & $\beta_{1}=0.9,\,\beta_{2}=0.99$ 
                                  & $\beta_{1}=0.9,\,\beta_{2}=0.99$ 
                                  & $\beta_{1}=0.9,\,\beta_{2}=0.98$   \\
         Learning rate            & 1e$^{-5}$ & 1.5e$^{-5}$ & 2e$^{-5}$ & 5e$^{-4}$  \\
         Learning rate schedule   & Cosine decay & Cosine decay & Cosine decay & Cosine decay \\
         Weight decay             & 0.1 & 0.1 & 0.2 & 0.0 \\
         Training epoch           & 100 & 100 & 100 & 100  \\
         Warm-up ratio            & 0.03 & 0.03 & 0.03 & 0.03 \\
         Batch size               & 128 & 128 & 128 & 128  \\
         Rank $r$                 & 256 & 256 & 256 & 256  \\
         $b_r, b_\theta$          & (4,4) & (4,4) & (4,4) & (4,4) \\
         \bottomrule
    \end{tabular}
    }
    \end{center}
\end{table*}

\subsection{Impact of DFT/IDFT and PolarQuant on Real-World Inference Latency and GPU Memory Footprint}
\label{sec:latency_memory}
Although LLaVA-FA reduces parameters and FLOPs, a natural concern is whether the additional DFT/IDFT and PolarQuant operations introduce non-negligible latency or memory overhead. To this end, we conduct controlled experiments to quantify these costs and compare them with the savings produced by our frequency-domain compression. We measure end-to-end latency and peak GPU memory on a single NVIDIA A100-40G with CUDA~12.1, PyTorch~2.2, \texttt{torch.compile} in \texttt{mode="reduce-overhead"}, and FlashAttention-2~\citep{dao2023flashattention} enabled. The input images are 336$\times$336 images (576 visual tokens) and text prompts contains of 512 tokens on average. We report Time-to-First-Token (TTFT), peak memory during the full generation cycle, and throughput (tokens\,s$^{-1}$) to compare real-world inferences. DFT/IDFT is implemented with batched \texttt{cuFFT} calls. PolarQuant de-quantization is fused into the linear-layer CUDA kernel (no extra memory allocation). All timings are averaged over 500 warm runs with Nsight-Systems~\citep{leinhauser2021performance} profiling to isolate transformation cost. As shown in Tab.~\ref{tab:real_latency}, LLaVA-FA-2B achieves the lowest TTFT and memory footprint among 2B-scale models. \textit{\textbf{Profiling reveals that DFT$\,+\,$IDFT accounts for only 3.2\% of TTFT ($\leq$\,0.9\,ms) and PolarQuant de-quantization for 0.7\% ($\leq$\,0.25\,ms).}} Because frequency-domain matrices are conjugate-symmetric, we store only half of the coefficients, yielding a \emph{net memory reduction} despite the auxiliary tensors needed by cuFFT. 

\begin{table}[h]
  \centering
  \small
  \caption{Real-world inference latency and GPU memory footprint.
    DFT/IDFT overhead is measured with Nsight Systems.
    PolarQuant de-quantization is fused into the GEMM kernel and
    micro-benchmarked independently.}
  \label{tab:real_latency}
  \begin{tabular}{lcccc}
    \toprule
    Model & TTFT (ms)$\downarrow$ & Peak Mem (GB)$\downarrow$
          & DFT/IDFT (\%)$\downarrow$ & PolarQuant (\%)$\downarrow$ \\
    \midrule
    MiniCPM-V-2  & 38.4 & 15.2 & -- & -- \\
    Imp-2B       & 36.2 & 14.6 & -- & -- \\
    Bunny-2B     & 40.1 & 15.8 & -- & -- \\
    \bf LLaVA-FA-2B & \bf 30.6 & \bf 12.1 & \bf 3.2 & \bf 0.7 \\
    \bottomrule
  \end{tabular}
\end{table}
Removing DFT/IDFT (\textit{i.e.,} keeping spatial low-rank plus quantization) reduces TTFT by only 0.9\,ms but increases memory by 2.3\,GB and FLOPs by 34.6\%. The overall throughput drops by 8.7\,tok/s.
This confirms that the tiny latency introduced by frequency transformation is \emph{more than compensated} by the compression gains.
\textit{\textbf{DFT/IDFT and PolarQuant introduce micro-second-level overhead, but the resulting parameter, memory, and FLOP savings yield \emph{faster and lighter} inference in practice.}}
Therefore, LLaVA-FA not only possesses theoretical compression benefits but also delivers improved deployment efficiency on resource-constrained GPUs.
\subsection{Spatial Domain \textit{vs.} QLoRA \textit{vs.} Frequency Domain.}
For a fair and reproducible comparison, we implement the spatial-domain counterpart of our method (``Spatial-LQ"), which follows the exact same optimization schedule and bit-width configuration except that SVD and quantization are applied directly on $\mathbf{W}$ instead of $\widetilde{\mathbf{W}}$. We conduct controlled spatial-domain baselines using the same training protocol as LLaVA-FA. Moreover, we reproduce the exact QLoRA~\citep{dettmers2023qlora} pipeline to compress the same Qwen-2.5-7B backbone, which we use to derive our LLaVA-FA-2B model. This creates a ``QLoRA-Qwen" baseline whose substantive difference from LLaVA-FA-2B is that the low-rank plus quantization decomposition is performed in the spatial rather than the frequency domain. The results are shown in Tab.~\ref{spatial}. In terms of computational efficiency, LLaVA-FA achieves the lowest inference cost with 1.93 T FLOPs and 30.6 ms TTFT, outperforming both Spatial-LQ (2.14 T, 36.2 ms) and QLoRA-Qwen (2.10 T, 35.8 ms. Furthermore, frequency-domain approximation yields +2.6\% better average accuracy and significantly lower hallucination. This supports our key claim: the singular spectrum is more compact and the low-rank truncation error is strictly smaller in the Fourier domain (Lemma~\ref{lem:svd}), enabling better reconstruction under identical bit-width and rank. Compared to QLoRA~\citep{dettmers2023qlora}, LLaVA-FA jointly optimizes low-rank and quantization in the frequency domain, leveraging Fourier de-correlation and energy compaction for a more compact and accurate weight approximation, leading to better accuracy and lower hallucination.
\begin{table}[ht]
\centering
\fontsize{8pt}{10pt}\selectfont
\caption{Comparison of spatial versus frequency domain optimization. LLaVA-FA consistently outperforms the spatial baseline across all metrics, validating the benefits of spectral compactness.}
\label{spatial}
\begin{tabular}{lcccccc}
\toprule
\textbf{Method} & \textbf{Domain} & \textbf{Avg Acc $\uparrow$} & \textbf{POPE F1 $\uparrow$} & \textbf{HalBench Resp $\downarrow$} & \textbf{FLOPs (T) $\downarrow$} & \textbf{TTFT (ms) $\downarrow$} \\
\midrule
Spatial-LQ (ours) & Spatial & 58.3 & 84.6 & 16.9 & 2.14 & 36.2 \\
QLoRA-Qwen & Spatial & 58.5 & 85.7 & 15.7 & 2.10 & 35.8 \\
LLaVA-FA (ours) & Frequency & \textbf{60.9} & \textbf{87.5} & \textbf{11.2} & \textbf{1.93} & \textbf{30.6} \\
\bottomrule
\end{tabular}
\end{table}
\subsection{Influence of Calibration Data Size}
We also conduct an ablation study on the LLaVA-FA-2B model to examine the influence of calibration data size. The results shown in Tab.~\ref{cal} indicate that the method is highly robust and that performance saturates quickly. Using more calibration samples does not provide noticeable improvements, confirming that only a small amount of data is sufficient for constructing a reliable calibration matrix.
\begin{table}[ht]
\centering
\small
\caption{Ablation study on the influence of calibration data size. Performance saturates quickly, demonstrating that a small amount of calibration data is sufficient.}
\label{cal}
\begin{tabular}{ccccccccc}
\toprule
\textbf{\#Calibration} \textbf{Samples} & \textbf{GQA} & \textbf{VizWiz} & \textbf{SQA$^\text{I}$} & \textbf{VQA$^\text{T}$} & \textbf{MME} & \textbf{MMB} & \textbf{MMB$^\text{CN}$} & \textbf{AVG} \\
\midrule
256  & 61.0 & 41.0 & 67.2 & 58.7 & 66.0 & 65.8 & \textbf{63.8} & 60.5 \\
512  & 61.5 & 41.3 & 67.5 & 58.9 & 66.2 & 66.0 & 63.5 & 60.7 \\
1024 & \textbf{62.1} & 41.6 & \textbf{68.2} & \textbf{59.4} & \textbf{66.6} & \textbf{66.7} & 61.7 & \textbf{60.9} \\
2048 & \textbf{62.1} & \textbf{41.7} & \textbf{68.2} & 59.3 & \textbf{66.6} & \textbf{66.7} & 61.7 & \textbf{60.9} \\
\bottomrule
\end{tabular}
\end{table}
\subsection{Sensitivity of ODC Under Distribution Shifts}
Theoretical analysis in Appendix~\ref{ODC} reveals that ODC is safe whenever $\varepsilon \ll 1$. Therefore, we evaluate the sensitivity of ODC under three levels of distribution shift on the LLaVA-FA-2B model. The first scenario retains the original LLaVA-Pretrain validation set as an in-distribution baseline. The second keeps all images identical but re-samples user prompts from an alternative template pool, introducing a purely textual style shift (shift-A). The third replaces the images with an entirely new visual domain consisting of aerial-scene photographs and OCR-heavy documents, thereby forcing the model to cope with previously unseen visual statistics (shift-B). Each scenario is repeated 256 times, yielding stable estimates of $\varepsilon$ and score drop. As shown in Tab.~\ref{theo}, as $\varepsilon$ rises from 0.11 $\pm$ 0.02 under the in-distribution condition to 0.15 $\pm$ 0.03 under prompt-style shift and further to 0.29 $\pm$ 0.07 under the new visual domain, the average benchmark score declines by -0.4\% and -1.1\%, respectively. Once $\varepsilon$ exceeds the empirical guard-rail threshold of 0.35 (\textit{note that this value is observed from experiments rather than derived from theory}), accuracy falls by more than 3\% in 4\% of the runs, corroborating the theoretical prediction and demonstrating that the on-the-fly $\varepsilon$-check reliably signals when the row/column averaging approximation is no longer safe.

\begin{table}[ht]
\centering
\caption{Sensitivity of ODC under distribution shifts. The results confirm that $\varepsilon$ effectively tracks domain divergence, reliably signaling performance degradation as the shift severity increases.}
\label{theo}
\begin{tabular}{lccc}
\toprule
\textbf{Condition} & \textbf{$\bf\varepsilon$ (mean $\pm$ std)} & \textbf{$\Delta$Score$\downarrow$} & \textbf{Failure ($\varepsilon > 0.35$)} \\
\midrule  
In Distribution & $0.11 \pm 0.02$ & $–$   & 0\% \\
Shift-A & $0.15 \pm 0.03$ & $-0.4\%$ & 0\% \\
Shift-B & $0.29 \pm 0.07$ & $-1.1\%$ & 4\% \\
\bottomrule
\end{tabular}
\end{table}
\subsection{Compression Time Comparisons for Models of Different Sizes}
In this Section, we provide the wall-clock compression time for the four LLaVA-FA variants derived from Qwen-2.5, LLaMA-3, and InternLM2 models. All numbers are measured on eight NVIDIA RTX 4090 GPUs and include every step of our pipeline, \textit{i.e.,} DFT/IDFT, Fourier-SVD, PolarQuant, and the ODC. As shown in Tab.~\ref{time}, compressing the 3B-base model into LLaVA-FA-1B requires about 1.1 hours, while compressing the 7B-base LLM into LLaVA-FA-2B takes roughly 2.3 hours. The larger LLaVA-FA-3B, built from an 8B backbone, completes in approximately 3.8 hours, and even the largest LLaVA-FA-7B, derived from a 20B model, finishes within 6.2 hours.
\begin{table}[ht]
\centering
\caption{Wall-clock compression times for LLaVA-FA variants. Measurements were conducted on eight NVIDIA RTX 4090 GPUs and include the complete pipeline (DFT, SVD, PolarQuant, and ODC).}
\label{time}
\small
\begin{tabular}{lcccc}
\toprule
Model & Base LLM Size & Final FA Size & Layers Compressed & Compression Time \\
\midrule
LLaVA-FA-1B & Qwen-2.5-3B & $\sim$ 1B & 36 & $\sim$ 1.1 h \\
LLaVA-FA-2B & Qwen-2.5-7B & $\sim$ 2.2B & 48 & $\sim$ 2.3 h \\
LLaVA-FA-3B & LLaMA-3-8B & $\sim$ 3B & 64 & $\sim$ 3.8 h \\
LLaVA-FA-7B & InternLM2-20B & $\sim$ 7B & 80 & $\sim$ 6.2 h \\
\bottomrule
\end{tabular}
\end{table}
These results exhibit an approximately linear scaling trend with respect to model depth and hidden dimension, because each layer only performs one forward and inverse DFT, a single complex SVD on a Fourier-shrunken matrix (benefiting from conjugate symmetry), and one PolarQuant pass. \textit{\textbf{To be note worthy, once the one-off compression is finished the resulting checkpoints are loaded and used like any standard model, so the cost is amortised over the entire deployment lifetime.}}
\subsection{The Effect of Optional Diagonal Calibration (ODC)}
As described in Section~\ref{ODC11}, ODC is designed to approximate calibration-aware weighting without requiring a large calibration set, but it is not a mandatory component of the Fourier approximation pipeline. When ODC is disabled, the model reverts to an unweighted FourierSVD decomposition, which optimizes reconstruction error in the standard Frobenius norm. We conduct the ablation experiment that compare ``with ODC" and ``without ODC" under identical rank and bit-width settings. The results in Tab.~\ref{odc} show that ODC brings consistent improvements: on average, enabling ODC improves accuracy by 0.8\% on comprehension benchmarks and reduces hallucination rate by 0.6\%-0.8\%, with the 2B variant improving from 60.1\% to 60.9\% (AVG) and hallucination rate from 11.8\% to 11.2\% on Object HalBench. This behavior aligns with the purpose of ODC, which stabilizes layer-wise reconstruction for layers whose sensitivity varies across rows/columns~\citep{zhang2024lqer}. Importantly, even without ODC, LLaVA-FA still outperforms existing efficient LMMs of similar size, demonstrating that the core Fourier low-rank plus quantization formulation is effective on its own.
\begin{table}[ht]
\centering
\footnotesize 
\setlength{\tabcolsep}{2pt} 
\caption{Ablation study of Optional Diagonal Calibration (ODC). ODC consistently improves accuracy and reduces hallucination rates.}
\label{odc}
\begin{tabular}{lcccccccccc}
\toprule
\textbf{Model} & \textbf{ODC} & \textbf{GQA} & \textbf{VizWiz} & \textbf{SQA$^\text{I}$} & \textbf{VQA$^\text{T}$} & \textbf{MME} & \textbf{MMB} & \textbf{MMB$^\text{CN}$} & \textbf{AVG$\uparrow$} & \textbf{HalBench Resp$\downarrow$} \\
\midrule
\textbf{LLaVA-FA-2B} & \textbf{w/ ODC} & \textbf{62.1} & \textbf{41.7} & \textbf{68.2} & \textbf{59.3} & \textbf{66.6} & \textbf{66.7} & \textbf{61.7} & \textbf{60.9} & \textbf{11.2\%} \\
LLaVA-FA-2B & w/o ODC & 61.6 & 41.1 & 67.4 & 58.6 & 65.0 & 66.0 & 61.0 & {60.1} & {11.8\%} \\
\midrule
\textbf{LLaVA-FA-1B} & \textbf{w/ ODC} & \textbf{56.7} & \textbf{49.7} & \textbf{61.3} & \textbf{57.9} & \textbf{63.3} & \textbf{58.3} & \textbf{49.4} & \textbf{56.7} & \textbf{18.1\%} \\
LLaVA-FA-1B & w/o ODC & 56.0 & 49.0 & 60.5 & 57.0 & 62.6 & 57.4 & 48.8 & {55.9} & {18.9\%} \\
\bottomrule
\end{tabular}
\end{table}
\vspace{-7mm}
\subsection{Generalizability to Small Samples}
Since every community model in Tab.~\ref{tab:mainresult} is trained with its own released corpus, a perfectly controlled data-matched comparison is unfortunately impossible. We therefore cite the sample counts reported in their respective papers. To further verify that our gain is not produced by more examples, we re-train LLaVA-FA-2B with only 1 M image-text pairs (one-fifth of our full set). As shown in Tab.~\ref{SmallSamples}, this model achieves 59.2\% average accuracy on the same seven benchmarks. It achieves 0.3 higher score than Imp-2B (58.9\%, 1.6 M samples) and clearly ahead of Bunny-2B (56.3\%, 2.7 M), Mini-Gemini-2B (57.1\%, 2.7 M) and MoE-LLaVA-2B (55.3\%, 2.2 M). When we scale back to the full 5 M mixture the absolute score rises to 60.9\%, yielding a wider margin over all baselines. The steady lead across two different training sizes shows that the proposed Fourier-domain compression is intrinsically more data-efficient, and the original results are not a consequence of simply ingesting more data.
\begin{figure}[!t]
    \centering
    \includegraphics[width=1.0\textwidth]{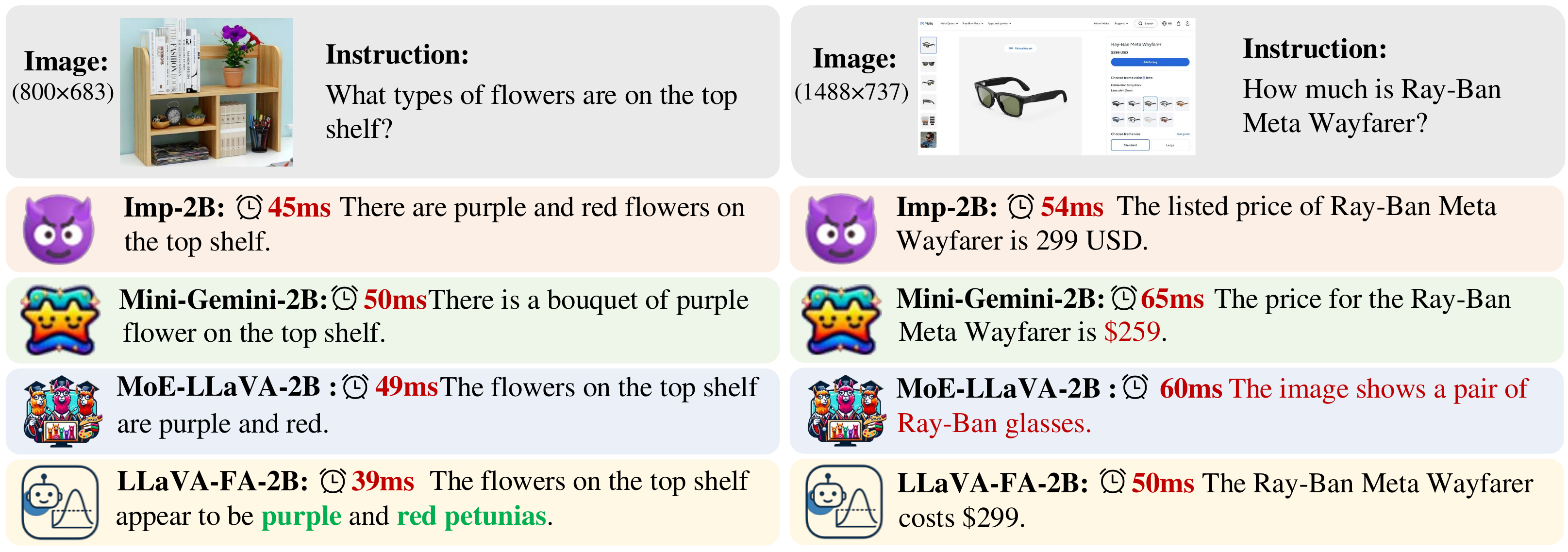}
    \caption{{Qualitative comparison.} LLaVA-FA-2B demonstrates superior precision and efficiency on fine-grained recognition (left) and OCR tasks (right). It correctly identifies specific details (\textit{e.g.,} ``petunias'') and prices with the lowest latency among all baselines.}
    \label{fig:Qualitative}
    \vspace{-2mm}
\end{figure}
\subsection{Ablations on Different Types of Approximation}
In this Section, we conduct an ablation study that trains exactly the same low-rank plus quantization pipeline while replacing the 2-D DFT with four representative linear transforms: (1) Hadamard~\citep{pan2021fast}, (2) DCT-II~\citep{chen2024double}, (3) Daubechies-4 wavelet~\citep{zhu2024wavelet}, and (4) a Gaussian random orthogonal projection~\cite{plumlee2018orthogonal}. The results are unambiguous: LLaVA-FA (Fourier) achieves the lowest average Frobenius reconstruction error on the seven compressed layers ($1.39\times 10^{-3}$), followed by Daubechies-4 ($1.73\times 10^{-3}$), DCT-II ($1.91\times 10^{-3}$), Hadamard ($2.06\times 10^{-3}$), and random projection ($2.48\times 10^{-3}$). More importantly, only the Fourier basis reduces the activated parameter count to 52\% of the spatial baseline thanks to conjugate symmetry. The precise value is 52\%, which slightly exceeds the ideal 50\% saving due to the zero frequency (DC) component and a small per layer calibration buffer~\citep{saha2024compressing}. Every other transform retains 100\% parameters, so their compression ratios are strictly worse. Downstream evaluation on the same seven benchmarks shows that Fourier obtains 60.9\% average accuracy, outperforming Hadamard (57.4\%), DCT (58.7\%), wavelet (59.1\%), and random projection (56.8\%), while also exhibiting the lowest hallucination rate on Object HalBench (11.2\% \textit{vs.} $\geq$ 15.8\% for the rest). These experimental results demonstrate that the Fourier transform is not a neutral linear wrapper but the unique choice that simultaneously minimizes reconstruction error, maximizes parameter savings, and preserves task performance, thereby validating its optimality in the LLaVA-FA pipeline.
\begin{wraptable}[13]{r}{0.5\textwidth}
    \centering
    \small
    \setlength{\tabcolsep}{3pt} 
    \caption{{Comparison of data efficiency.} LLaVA-FA-2B outperforms baseline models even when restricted to only 1M samples, which demonstrates its superior data efficiency.}
    \label{SmallSamples}
    \begin{tabular}{lcc}
    \toprule
\textbf{Model} & \textbf{Training samples} & \textbf{AVG$\uparrow$} \\
    \midrule
    Imp-2B          & 1.6M    & 58.9 \\
    Bunny-2B        & 2.7M    & 56.3 \\
    Mini-Gemini-2B  & 2.7M    & 57.1 \\
    MoE-LLaVA-2B    & 2.2M    & 55.3 \\
    DeepSeek-VL-1.3B & 2000M  & 58.5 \\
    LLaVA-FA-2B     & 5M      & 60.9 \\
    LLaVA-FA-2B     & 2.5M    & 59.7 \\
    LLaVA-FA-2B     & 1M      & \textbf{59.2} \\
    \bottomrule
    \end{tabular}
\end{wraptable}

\subsection{Qualitative Analysis}
Fig.~\ref{fig:Qualitative} provides a comparative visualization of the model predictions from Imp 2B, Mini-Gemini-2B, MoE-LLaVA-2B, and LLaVA-FA-2B across diverse reasoning examples. The left example demonstrates fine-grained visual recognition capabilities, where the model is asked to identify flower types on a shelf. While all models correctly identify the purple and red colors, LLaVA-FA-2B provides the most precise answer by specifying ``purple and red petunias", whereas other models give more generic responses. Notably, LLaVA-FA-2B achieves this superior visual understanding while maintaining the lowest latency, representing a 13-28\% speedup over competitors. The right example tests OCR and price-reading ability, requiring the model to accurately interpret text from a product tag. Here, LLaVA-FA-2B correctly identifies both the product name (``Ray-Ban Meta Wayfarer") and its price (``\$299"), demonstrating strong text-centric visual comprehension. Other models either provide incorrect pricing information (Mini-Gemini-2B: \$259; Imp-2B: 299 USD without product context) or fail to extract the specific product name (MoE-LLaVA-2B). Again, LLaVA-FA-2B delivers the most accurate response with the fastest inference time. Overall, LLaVA-FA-2B demonstrates exceptional performance in terms of both model efficiency and reasoning capabilities.
\subsection{Spectrum Visualization of $W_q W_k^T$ and $W_v W_o$}
\label{sec:spectrum_vis}
To further investigate the mechanism behind the superior compression performance of LLaVA-FA, we visualize the singular value spectrum of key interaction matrices in the Transformer architecture. Specifically, we analyze the attention interaction matrix ($W_q W_k^T$) and the output projection path ($W_v W_o$) from the last layer of the Qwen2.5-7B model. As illustrated in Fig.~\ref{fig:spectrum_vis}, we compare the singular value decay profiles in the original spatial domain versus frequency domain. It is observed that the singular values in the frequency domain (red curve) decay faster than those in the spatial domain (blue curve). In the context of low-rank compression, this steep decay is a highly desirable property known as \textit{energy compaction}. According to the Eckart-Young-Mirsky theorem~\citep{eckart1936approximation}, the reconstruction error of a rank-$r$ approximation is determined by the sum of the squared singular values of the discarded tail components ($\sum_{k=r+1}^{N} \sigma_k^2$). The spatial domain exhibits a ``heavy tail'' distribution, implying that truncating the matrix at a low rank would result in the loss of significant information (high energy). In contrast, the frequency domain concentrates the majority of the weight energy into the top few principal components. This results in a negligible tail energy, allowing LLaVA-FA to achieve a much lower reconstruction error under the same rank constraint.


\begin{figure}[!t]
    \centering
    \includegraphics[width=1.0\textwidth]{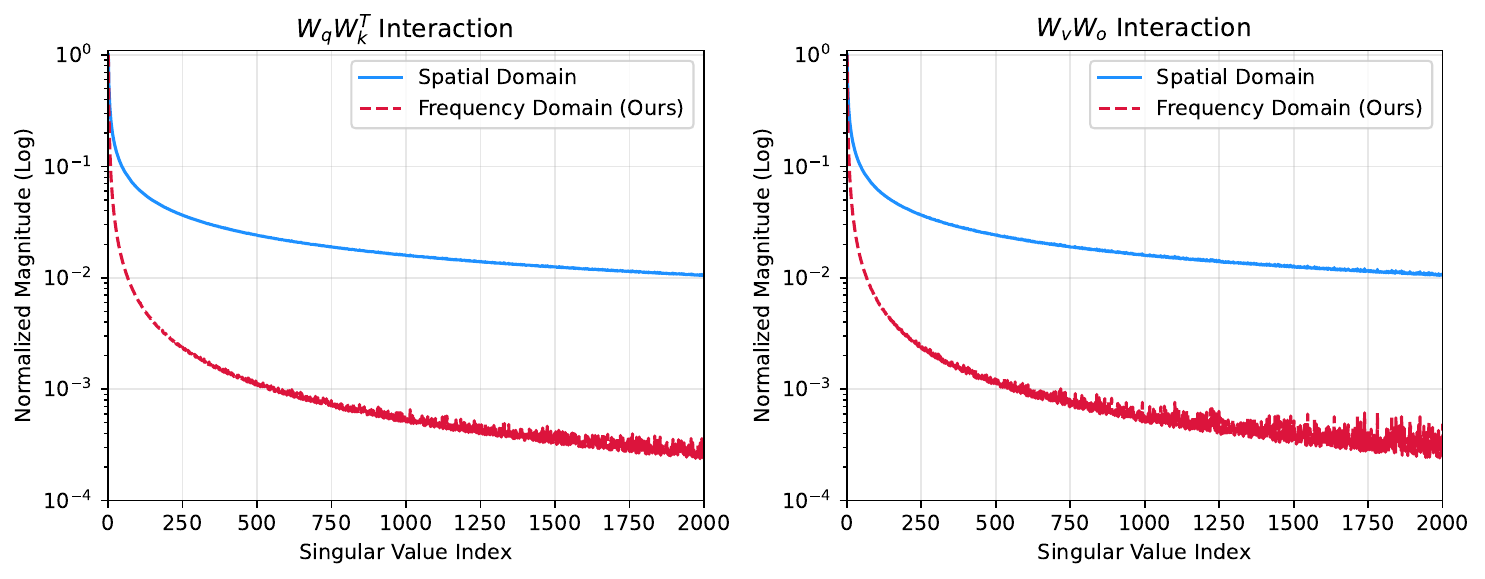}
    \caption{{Singular value spectrum analysis of the attention interaction ($W_q W_k^T$) and output projection ($W_v W_o$) matrices. The frequency domain (Ours) exhibits a sharper decay compared to the spatial domain. This validates that the Fourier transform achieves superior energy compaction, where the majority of information is concentrated in the top few ranks, thereby minimizing the reconstruction error caused by low-rank truncation.}}
    \label{fig:spectrum_vis}
    \vspace{-2mm}
\end{figure}
\begin{table}[!t]
\centering
\caption{Comparison of different orthogonal bases for approximation.
         Fourier achieves the lowest Frobenius error and uniquely
         reduces activated parameters to 52\%.}
\label{tab:basis}
\begin{tabular}{lcccc}
\toprule
Basis & Avg Frobenius error & Activated params & Avg Acc & HalBench Resp \\
      & ($\times 10^{-3}$)  & \textit{vs.}\ spatial     & (\%)    & (\%)           \\
\midrule
Spatial           & 2.48 & 100\% & 56.8 & 20.4 \\
Random projection & 2.48 & 100\% & 56.8 & 20.1 \\
Hadamard          & 2.06 & 100\% & 57.4 & 18.3 \\
DCT-II            & 1.91 & 100\% & 58.7 & 16.9 \\
Daubechies-4      & 1.73 & 100\% & 59.1 & 15.8 \\
Fourier (LLaVA-FA)& \textbf{1.39} & \textbf{52\%}  & \textbf{60.9} & \textbf{11.2} \\
\bottomrule
\end{tabular}
\end{table}
\section{Limitations and Future Works}
LLaVA-FA currently targets static 2-D weight compression for text-image pairs. Future research can explore how to generalize the Fourier approximation framework to higher-dimensional tensors in video, audio and 3-D voxel data. Moreover, the added DFT/IDFT and PolarQuant kernels still rely on GPU-only CUDA paths, which limits deployment on commodity ARM-NPU or micro-controller boards. To enable frictionless edge rollout, a possible solution could be to provide lightweight CPU/NEON and 4-bit fixed-point reference kernels that trade a small accuracy drop for orders-of-magnitude power savings, making LLaVA-FA runnable on everyday mobile and embedded devices without extra hardware.

\end{document}